\documentclass[twocolumn, journal]{IEEEtran}
\usepackage{ifpdf}
\usepackage{cite}
\usepackage{bbm}
\ifCLASSINFOpdf
\usepackage{graphicx}
\usepackage{subfigure}
\usepackage{epstopdf}
\usepackage{xcolor}
\usepackage{kotex}
\else
\fi
\usepackage{float}
\usepackage{amsmath}
\usepackage{amssymb}
\usepackage{amsthm}
\usepackage{mathtools}

\newtheorem{lemma}{Lemma}

\newtheorem{theorem}{Theorem}
\newtheorem{remark}{Remark}

\newtheorem{assumption}{Assumption}
\usepackage{algorithm}

\usepackage{algpseudocode}

\allowdisplaybreaks
\usepackage{array}

\usepackage{stfloats}
\usepackage{tikz}

\usepackage[shortlabels]{enumitem}
\usepackage{comment}
\usepackage{multirow}
\usepackage{cleveref}
\usepackage{booktabs}
\renewcommand{\arraystretch}{1.4}
\usepackage{pifont}
\newcommand{\cmark}{\ding{51}}
\newcommand{\xmark}{\ding{55}}
\usepackage{array}
\usepackage{makecell}

\begin{document}
\title{Importance-Aware Feature Sparsification for Wireless Split Learning}
\author{
    \IEEEauthorblockN{Bumjun Kim},~\IEEEmembership{Graduate Student Member,~IEEE},
    \IEEEauthorblockN{Yoon Huh},~\IEEEmembership{Graduate Student Member,~IEEE},
	and
	\IEEEauthorblockN{Wan Choi},~\IEEEmembership{Fellow,~IEEE}

    \thanks{A part of this paper \cite{kim2025feature} was presented in IEEE Vehicular Technology Conference (VTC) Fall, Chengdu, China, Oct. 19-23, 2025.}
    \thanks{Received 25 November 2025; revised 04 June 2026; accepted 29 September 2026. Date of publication xx October 2026; date of current version xx October 2026. This work was supported by Institute of Information \& communications Technology Planning \& Evaluation (IITP) grant funded by the Korea government(MSIT) (No. RS-2026-25523842). (\emph{Corresponding Author: Wan Choi})}
	\thanks{B.~Kim, Y.~Huh and W.~Choi are with the Department of Electrical and Computer Engineering, Seoul National University (SNU), and the Institute of New Media and Communications, SNU, Seoul 08826, Korea. (e-mail: \{eithank96, mnihy621, wanchoi\}@snu.ac.kr)}
        \vspace{-3mm}

}

\maketitle

\begin{abstract}
Wireless split learning (SL) reduces on-device computation by offloading upper layers to a server, yet transmitting high-dimensional intermediate features at each iteration remains a major communication bottleneck. Existing methods select features at the client side using task-agnostic criteria such as magnitude, statistics, or clustering, which increases client-side processing and often degrades accuracy under non-independent and identically distributed (non-i.i.d.) client data. We propose importance-aware class-balanced sparsification (ICS), a lightweight approach in which the server ranks feature channels using Grad-CAM-based scores obtained from the true-class logit during backpropagation. The per-class scores are aggregated into a class-balanced, label-agnostic importance vector that mitigates head-class bias under label skew, and each client reuses this vector in the next round to retain the top-$N$ feature channels, incurring no additional client-side forward or backward passes. We further derive a non-asymptotic convergence bound that isolates the sparsification-induced error and characterizes how the sparsification ratio and mini-batch size jointly affect convergence under a fixed communication budget, and we analyze the communication and computational overhead of ICS against representative baselines. Beyond sequential CNN-based SL, we extend ICS to parallel split learning and to transformer-based models. Experiments show that ICS consistently outperforms the baselines, with larger gains under severe non-i.i.d. partitions.
\end{abstract}
\begin{IEEEkeywords}
Communication-efficient split learning, Grad-CAM, sparsification, compression, heterogeneous data.
\end{IEEEkeywords}

\section{Introduction}
Driven by the expansion of the internet of things (IoT) and advances in next generation wireless technologies, the number of edge devices that capture data in real time continues to grow at an extraordinary pace. These devices produce diverse and large volumes of information including sensor readings, user profiles, and video streams, which now power AI applications across healthcare, natural language processing, transportation, and other domains. The conventional centralized learning (CL) trains models at a server by collecting raw data from devices, yet this approach increasingly confronts hard limits because data must leave its source devices, which heightens privacy risk and collides with regulatory requirements \cite{2021chen}. Consequently, CL is often impractical for modern edge intelligence, motivating approaches that keep data local while still enabling collaborative training.

Federated learning (FL) \cite{kim2024privacy,  huh2025feature, abrar2025biased,lin2020federated} enables collaborative model training while keeping data local to each client. In FL, clients update their models using private data and transmit only parameter updates or gradients to a central server, thereby reducing the risk of sensitive information leakage. While this paradigm avoids direct exposure of raw data, it still requires fully on-device training, which can be prohibitive for resource-constrained clients. For example, ResNet-50 has roughly $25$M parameters and requires more than $3.8$ GFLOPs for a single forward pass \cite{he2016deep}; the backward pass doubles or even triples the computation, while also increasing memory usage due to activations and optimizer states. On devices such as smartphones, wearables, and IoT nodes, these requirements translate into higher latency, energy consumption, and thermal load, often becoming the practical bottleneck in FL deployments.

Split learning (SL) \cite{vepakomma2018split, 2024Lin, 2024Lin2, 2023Lan} addresses this bottleneck by redistributing computation between clients and the server. The network is partitioned at a designated cut layer into a lightweight client-side model and a heavier server-side model. During training, each client performs the forward pass only up to the cut layer, producing intermediate features that are transmitted to the server. The server then completes the forward pass, computes the loss, backpropagates through its portion of the model, and returns the gradients with respect to the intermediate features. Using these gradients, clients update their local parameters. This division shifts the most compute- and memory-intensive layers to the server, significantly reducing on-device FLOPs and activation footprints, thereby lowering energy consumption and end-to-end latency. In the vanilla SL protocol involving multiple clients, this interaction is typically performed sequentially, only one client communicates with the server at a time, forcing clients to take turns. This split topology is also maintained during the inference phase. The client computes and transmits the intermediate features, and the server performs the remaining forward pass to return the final result \cite{lee2023wireless, Universalyoon2025, huh2026markov}.

Despite these advantages, SL still suffers from high training
latency due to sequential client-server interactions \cite{jeon2020privacy,thapa2022splitfed} and per-iteration transmission of high-dimensional intermediate features \cite{zhou2024mask,oh2025communication,wang2022fedlite}. To alleviate the waiting overhead caused by sequential client participation, parallel split learning (PSL) methods \cite{jeon2020privacy,thapa2022splitfed} allow multiple clients to perform client-side computation and transmit intermediate features in parallel. However, such parallelism comes at the cost of supporting simultaneous uplink transmissions within the same training iteration, which requires larger aggregate uplink resources in wireless systems. Moreover, it does not reduce the intermediate feature overhead transmitted by each client. Since intermediate features must be uploaded at every iteration, their cumulative communication cost can dominate the overall training overhead, regardless of whether clients participate sequentially or in parallel.

To reduce the intermediate feature overhead, several communication-efficient SL methods were proposed. Existing studies typically decide which elements of the intermediate feature are retained for uplink transmission based on random sampling \cite{zhang2024federated}, magnitudes \cite{zhou2024mask}, feature statistics \cite{oh2025communication}, or clustering \cite{wang2022fedlite}. Although these methods reduce the number of transmitted feature elements, their selection strategies are generally computed on the client side. This partially contradicts the primary motivation of SL, which is to offload computationally demanding operations from resource-constrained clients to the server. Moreover, these strategies do not reflect the task-specific contribution of each feature to the final prediction, making it difficult to ensure that the transmitted features are task-relevant.

This issue becomes more critical under non-independent and identically distributed (non-i.i.d.) client data. In practical wireless learning scenarios, clients often collect data from diverse environments, resulting in heterogeneous data distributions. Under such data skew, feature selection based solely on client-side information can be biased toward locally dominant classes. As a result, the transmitted features may overrepresent client-specific characteristics rather than class-balanced task relevance, which can degrade the global training performance. Therefore, an effective sparsification strategy for wireless SL should be communication-efficient, task-aware, robust to non-i.i.d. data skew, and lightweight at the client side.

\begin{table*}
\centering
\caption{Comparison of FL, sequential SL and PSL.}
\label{tab:fl_sl_psl}
\renewcommand{\arraystretch}{1.25}

\begin{tabular}{p{0.11\linewidth}p{0.27\linewidth}p{0.24\linewidth}p{0.25\linewidth}}
\hline
\textbf{Paradigm} 
& \textbf{Main idea}
& \textbf{Client-side computation}
& \textbf{Uplink communication overhead} \\ \hline

FL
& Each client trains the full model locally, and the server aggregates the uploaded model updates.
& Full forward/backward pass over the entire model at each client, i.e., $\mathcal{O}(\mathcal{C}_{d}+\mathcal{C}_{s})$.
& Model updates are uploaded from the clients, i.e., $\mathcal{O}(K(P_{d}+P_{s}))$ in total. \\

Sequential SL
& The model is split into client-side and server-side parts, and clients communicate with the server sequentially.
& Forward/backward pass only over the client-side model at each client, i.e., $\mathcal{O}(\mathcal{C}_{d})$.
& Intermediate features are uploaded sequentially, i.e., $\mathcal{O}(KBCUV)$ in total. \\

PSL
& Multiple clients perform client-side computation and transmit intermediate features to the server in parallel. Client-side models are synchronized after the local updates.
& Forward/backward pass over the client-side model at each client, i.e., $\mathcal{O}(\mathcal{C}_{d})$.
& Intermediate features and client-side model updates are uploaded, i.e., $\mathcal{O}(K(BCUV+P_{d}))$ in total. \\ \hline

\end{tabular}

\end{table*}

Motivated by these observations, we propose an \textit{importance-aware class-balanced sparsification} \textbf{(ICS)}, inspired by gradient-weighted class activation mapping (Grad-CAM) \cite{selvaraju2017grad}, originally developed in the context of explainable AI (XAI). Grad-CAM measures the contribution of intermediate features to class-specific predictions, highlighting the most salient intermediate feature channels. Recently, Grad-CAM has been used to guide resource allocation in semantic communication systems. In particular, the authors in \cite{zhou2025fast} employed Grad-CAM-inspired feature importance scores on a pre-trained JSCC encoder to learn a feature importance predictor at the transmitter, which is then used to allocate space–time resources during inference. In contrast, our work integrates Grad-CAM directly into the training loop of wireless SL under distributed clients with non-i.i.d. data, transforming it from a post-hoc interpretability tool into a core component of the SL training process. The server computes all importance scores during the backpropagation stage, eliminating the need for any transmitter-side processing.

Specifically, the server computes class-specific importance vectors, and aggregates them into a \emph{class-balanced, label-agnostic} importance vector. This \emph{class-balanced} vector design ensures robustness to client label skew, while its \emph{label-agnostic} design enables reuse of the sparsification during inference, where ground-truth labels are unavailable. This importance vector is then returned to the client and reused in the subsequent iteration to sparsify intermediate features for uplink transmission, ensuring that only the most discriminative channels are transmitted within a given bandwidth budget. Crucially, this entire process is performed by the server, imposing no additional computation overhead on the resource-constrained clients. This design choice strictly adheres to the core philosophy of SL, which is to offload demanding computations away from the clients.

Furthermore, we provide a convergence analysis that addresses a critical aspect often overlooked in wireless SL. Unlike in FL, the uplink overhead in SL scales directly with the mini-batch size. Our analysis explicitly quantifies the error introduced by the sparsification ratio and characterizes its joint interaction with the batch size, revealing a fundamental trade-off between stochastic gradient variance and per-sample information fidelity under a fixed communication budget.

Beyond the sequential SL, we further extend \textbf{ICS} to PSL. Since multiple clients transmit intermediate features in parallel in the framework, the server constructs a shared class-balanced importance vector by aggregating importance scores across participating clients.

\subsection{Related Works}

\subsubsection{Distributed Learning Frameworks}
FL keeps raw data local but requires each client to train the full model and upload model updates, imposing substantial computation and memory burden on resource-constrained clients. In contrast, sequential SL partitions the model between the client and the server, so each client executes only the client-side model while exchanging intermediate features and activation gradients with the server. PSL further reduces sequential waiting time by allowing multiple clients to perform client-side computation and upload intermediate features in parallel, which requires larger aggregate uplink resources within the same training iteration. These differences highlight that SL-based frameworks reduce client-side computation but make intermediate feature transmission a central communication bottleneck. Table~\ref{tab:fl_sl_psl} summarizes the main differences among FL, sequential SL and PSL with the main notations defined in Table~\ref{tab:notation}.
\begin{table*}
\caption{Comparison of communication-efficient split learning methods.}
\label{tab:related_work_comparison}

\centering
\renewcommand{\arraystretch}{1.25}
\setlength{\tabcolsep}{3pt}
\begin{tabular}{
>{\raggedright\arraybackslash}p{0.22\textwidth}
>{\raggedright\arraybackslash}p{0.27\textwidth}
>{\centering\arraybackslash}p{0.14\textwidth}
>{\centering\arraybackslash}p{0.19\textwidth}
>{\centering\arraybackslash}p{0.12\textwidth}
}
\hline
\textbf{Method}
& \textbf{Selection criterion}
& \makecell{\textbf{Task-aware}\\\textbf{importance}}
& \makecell{\textbf{No extra client-side}\\\textbf{computation}}
& \makecell{\textbf{Non-i.i.d.}\\\textbf{consideration}} \\ \hline

RS~\cite{zhang2024federated}
& Random sampling
& \xmark
& \cmark
& \xmark \\

TS~\cite{zhou2024mask}
& Feature magnitude
& \xmark
& \xmark
& \xmark \\

RTS~\cite{zheng2023reducing}
& Feature magnitude + random sampling
& \xmark
& \xmark
& \xmark \\

FedLite~\cite{wang2022fedlite}
& Feature clustering
& \xmark
& \xmark
& \xmark \\

SplitFC~\cite{oh2025communication}
& Feature statistics
& \xmark
& \xmark
& \xmark \\

\textbf{Proposed ICS}
& \textbf{Grad-CAM-based channel importance}
& \cmark
& \cmark
& \cmark \\ \hline
\end{tabular}

\end{table*}

\subsubsection{Parallel Split Learning Methods}
PSL~\cite{jeon2020privacy, thapa2022splitfed} was developed to mitigate the waiting time caused by sequential client participation by transmitting intermediate features to the server in parallel. CPSL~\cite{wu2023split} extended this idea by grouping clients into multiple clusters, where clients are trained in parallel within each cluster and inter-cluster aggregation is performed sequentially to further reduce training latency. 

More recently, several studies  addressed system-level optimization issues in PSL frameworks. ESFL~\cite{zhu2024esfl} jointly optimized the model splitting point and server-side computing resource allocation under heterogeneous device resources. AdaptSFL~\cite{lin2025adaptsfl} adaptively controlled the model splitting point and client-side aggregation interval using convergence analysis. The authors in~\cite{wang2025split} further considered dynamic wireless environments by jointly optimizing model splitting, server-side computing resource allocation, and transmission power control. HSFL~\cite{lin2025hierarchical} extended to hierarchical systems through tier-wise model aggregation and model splitting optimization.
\subsubsection{Communication-Efficient Split Learning Methods}
Communication-efficient SL methods aim to reduce the transmission overhead caused by high-dimensional intermediate features. FedLite~\cite{wang2022fedlite} introduced a scalable SL framework for resource-constrained clients by clustering similar intermediate features and applying gradient correction to compressed features. SplitFC~\cite{oh2025communication} further reduced communication overhead through adaptive feature-wise compression, where intermediate feature vectors are sparsified according to standard-deviation-based probabilities. Randomized and magnitude-based sparsification methods~\cite{zhang2024federated,zheng2023reducing,zhou2024mask} were also  studied to reduce the intermediate feature overhead by selecting only a subset of feature elements for uplink transmission.

Although these methods reduce the number of transmitted feature elements, they still have important limitations. First, feature selection is generally performed at the client side, requiring additional operations such as feature clustering, feature-statistics computation, or magnitude-based ranking before transmission. This increases the preprocessing burden on resource-constrained clients. Second, these methods select features without explicitly considering their task-specific contribution to the final prediction. Therefore, the retained features may fail to preserve task-relevant information, especially under non-i.i.d. client data.

To further clarify the novelty of \textbf{ICS}, Table~\ref{tab:related_work_comparison} compares \textbf{ICS} with existing communication-efficient SL methods. Unlike other approaches, \textbf{ICS} evaluates feature importance using Grad-CAM-based class-specific saliency computed at the server. The resulting class-balanced importance vector allows clients to sparsify intermediate features without additional neural network computation at the client side, while preserving task-aware and non-i.i.d.-robust feature selection.

\subsection{Contributions}
The main contributions of this paper are as follows: 
\begin{itemize} 
\item \textbf{Importance-aware feature sparsification:} We propose a novel sparsification method, termed  \textbf{ICS}, which leverages Grad-CAM to evaluate the task-specific importance of intermediate features. By using importance vectors computed at the server in previous iterations as surrogates, the proposed approach enables efficient client-side feature compression without incurring additional forward–backward passes at the client side.
\item \textbf{Class-balanced label-agnostic importance vector:} We design a class-balanced aggregation mechanism that converts class-specific importance vectors into a unified label-agnostic importance vector. This design mitigates class bias under non-i.i.d. client data and preserves class-balanced task-relevant features, while also enabling feature sparsification during inference where ground-truth labels are unavailable.
\item \textbf{Theoretical analysis:} We provide a theoretical analysis of \textbf{ICS} in terms of both system overhead and convergence. For the overhead analysis, we analyze the uplink communication overhead and the computational complexity of \textbf{ICS} against conventional SL and existing communication-efficient SL methods.  The analysis  shows that \textbf{ICS} reduces the transmitted intermediate feature overhead without transmitting additional mask indices and shifts the importance-estimation cost from the clients to the server. For the convergence analysis, we derive a non-asymptotic bound that explicitly characterizes the sparsification error and reveals the trade-off between the mini-batch size and the sparsification ratio under a fixed uplink communication budget.
\item \textbf{Extension to PSL frameworks:} We extend \textbf{ICS} beyond sequential SL to PSL. In the setting, the server constructs a shared class-balanced importance vector by aggregating Grad-CAM-based importance scores across participating clients, enabling importance-aware sparsification under parallel client participation. 
\item \textbf{Extensive experimental validation:} We validate \textbf{ICS} across diverse tasks and datasets, showing consistent accuracy gains and faster convergence over baselines, especially under non-i.i.d. client data. In addition, we supplement these results with a qualitative Grad-CAM-based ablation study for the classification task, which visually confirms that \textbf{ICS} consistently identifies important, task-relevant features.
\end{itemize}

\subsection{Notations}
For clarity, the key notations used in the paper are summarized in Table \ref{tab:notation}.

\begin{table}
\caption{Notations and Descriptions.}
\label{tab:notation}

\centering
\renewcommand{\arraystretch}{1.3}
\begin{tabular}{ll}
\hline
\textbf{Symbol}                 & \textbf{Description}                                \\ \hline
$K$                             & Number of clients.                                  \\
$T$                             & Number of iterations.                               \\
$E$                             & Number of local steps.                              \\
$B$                             & Mini-batch size.                                    \\
$L$ & Number of output classes. 
                           \\
$\mathcal{L}$ & Per-sample loss function.  
                           \\
$C,\,U,\,V$                     & Dimension of feature channel, height and width. \\
$z_k^{t,e},\,\tilde{z}_k^{t,e}$ & Original and sparsified intermediate feature.       \\
$\eta^t$                        & Learning rate at iteration $t$.                         \\
$N$                             & Number of retained feature channels after sparsification. 
                \\
$R$                             & Sparsification ratio $R=N/C$.                       \\
$I_k^t$                         & Class-balanced importance vector.           \\
$S$                             & Smoothness constant.                                \\
$\beta$ & Momentum coefficient for EMA in ICS.
                           \\
$\psi^2$ & Bounded client heterogeneity.                                                \\      
$m_k^{t,e}$                     & Update error caused by sparsification.              \\ 
$\mathcal{C}_{d},\,\mathcal{C}_{s}$ & Computational costs of client-side and server-side models. \\
$P_{d},\,P_{s}$ & Numbers of client-side and server-side model parameters.  \\\hline
\end{tabular}

\end{table}

\section{System Model}\label{sec: system model}
Consider a split learning framework with a central parameter server and $K$ local clients, each equipped with a single antenna. The goal is to collaboratively train a global model $\theta = [\theta_d \,\|\, \theta_s]$, where $\theta_d$ and $\theta_s$ are the client-side and server-side model parameters, respectively, and $[\cdot \,\|\, \cdot]$ denotes concatenation. Each client $k\in\mathcal{K}=\{0,\dots,K-1\}$ holds a local dataset
$\mathcal{D}_k=\{(x_i,y_i)\}_{i=1}^{D_k}$, where
$x_i$ is an input image, $y_i\in\{0,1\}^L$ is its corresponding one-hot encoded label with the total number of classes $L$, and $D_k=|\mathcal{D}_k|$. Let the global dataset be $\mathcal{D}=\bigcup_{k\in\mathcal{K}}\mathcal{D}_k$. For each client, the local objective function is
\begin{align}
    F_{k}(\theta_{k})=\frac{1}{|\mathcal{D}_k|}\sum_{(x_i,y_i)\in\mathcal{D}_k}\mathcal{L}(f_s(f_d(x_i;\theta_{d,k});\theta_{s,k}), y_{i}),
\end{align}
where $f_{d}(\;\cdot\;;\theta_{d,k})$ and $f_{s}(\;\cdot\;;\theta_{s,k})$ representing the client-side feature extraction and server-side classification mapping parameterized by the parameters $\theta_{d,k}$ and $\theta_{s,k}$, respectively. The global objective function is formulated as $F(\theta)=\sum_{k\in\mathcal{K}}\frac{|\mathcal{D}_k|}{|\mathcal{D}|}F_k(\theta_k)$.

Fig. \ref{fig: System model} represents our system model with the numbered \emph{stages} described below. The SL protocol is executed over $T$ communication rounds. The client and the server update $\theta_{k}^{t,0}$ using the client's dataset $\mathcal{D}_k$ by employing a mini-batch stochastic gradient descent (SGD) algorithm. The update rule is therefore 
\begin{align}
    \theta_{k}^{t,e+1} = \theta_{k}^{t,e} - \eta^{t} \nabla F_{k}^{t,e}(\theta_{k}^{t,e}),
\end{align}
where $e \in \{0, \dots, E - 1\}$ is the index of local step given the total local steps $E$, and $\eta^{t}$ denotes the learning rate at global iteration $t$. Here, $\nabla F_{k}^{t,e}(\theta_{k}^{t,e})$ denotes the stochastic gradient with respect to the complete parameters $\theta^{t,e}_k$. We use an explicit subscript when referring to the gradient with respect to a specific component. In each iteration $t$ at local step $e$, for every client $k \in \mathcal{K}$, the following \emph{stages} are carried out:

\begin{figure}[!t]
    \centering
    \includegraphics[width=0.4\textwidth]{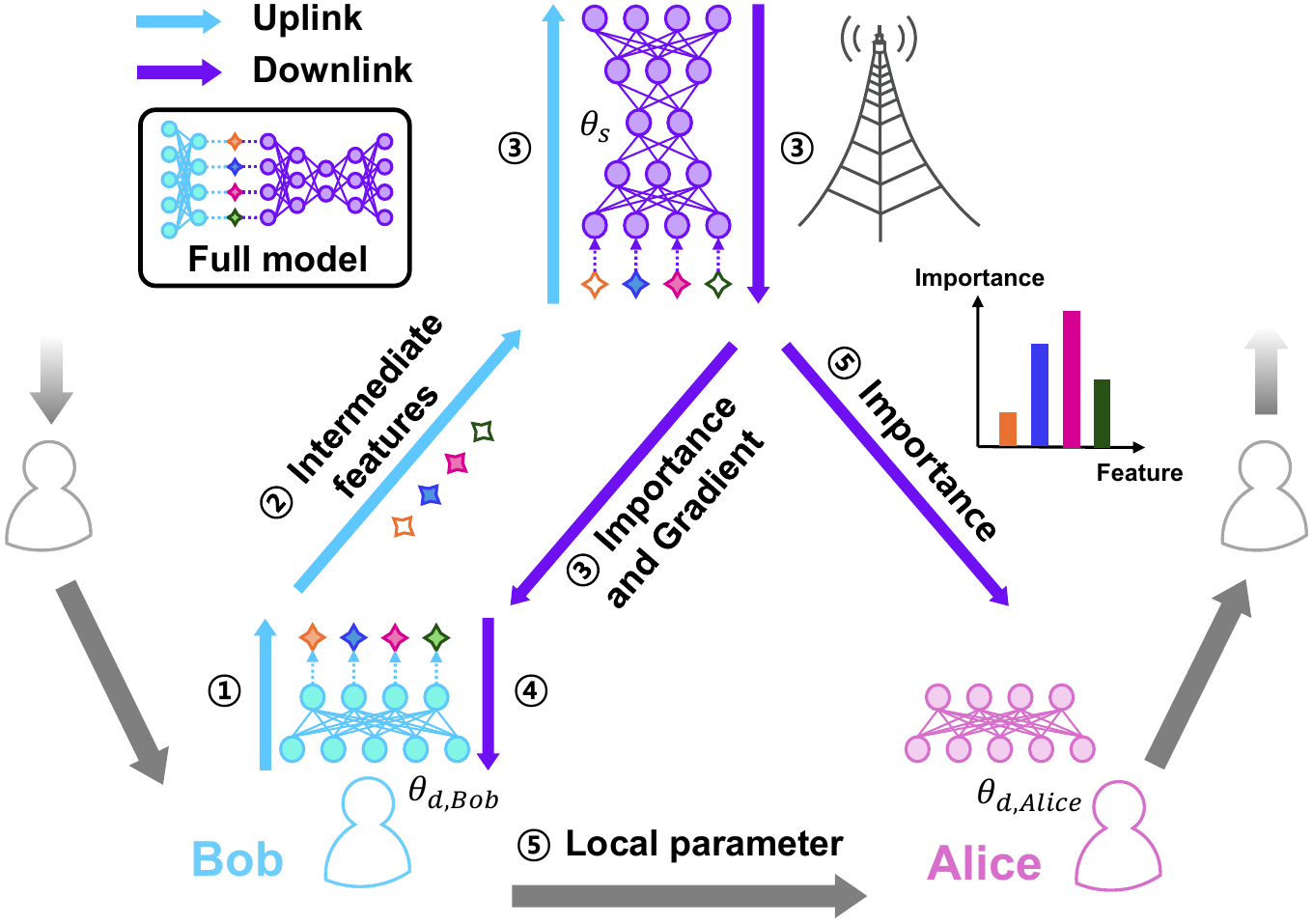}
    \caption{System model of proposed framework with the numbered \emph{stages}.}
    \label{fig: System model}
\end{figure}

\begin{enumerate}
    \item \textbf{Client-side Feature Extraction:} 
    Client $k$ draws a random mini-batch $\mathcal{B}_{k}^{t,e} \subset \mathcal{D}_{k}$ of size $B$. Let $x_{k}^{t,e}=\{x_{k,b}^{t,e}\}_{b=1}^B$ be the input data in the mini-batch $\mathcal{B}_{k}^{t,e}$ of client $k$ at iteration $t$ and local step $e$. Then, the mini-batch is processed by the local feature extractor $f_{d}$ to produce intermediate features $z_{k}^{t,e} = \bigl[z_{k,1}^{t,e}, \dots, z_{k,B}^{t,e}\bigr]^\top \in \mathbb{R}^{B\times C\times U \times V},$ where $C$ denotes the number of feature channels and $U\times V$ are the layer dimensions. This step can be expressed as $z_{k}^{t,e} = f_{d}(x_{k}^{t,e};\theta_{d,k}^{t,e})$.

    \item \textbf{Uplink Transmission:} 
    The intermediate feature $z_k^{t,e}$, together with its corresponding labels, is transmitted to the server over the wireless uplink. Prior to transmission, client $k$ applies sparsification to $z_{k}^{t,e}$ by the proposed \textbf{ICS} in accordance with the uplink capacity constraints, as described in Section \ref{sec: sparsification} later, such that the received signal can be treated as effectively noise-free.

    \item \textbf{Server-side Computation:} 
    Upon reception, the server completes the forward pass by computing 
    \begin{align}
        \hat{y}_{k}^{t,e} = f_{s}(z_{k}^{t,e};\theta_{s,k}^{t,e}) \in \mathbb{R}^{B \times L}.
    \end{align}
    The server evaluates the per-sample loss $\mathcal{L}_{vec}(\hat{y}_{k}^{t,e}, y_{k}^{t,e}) \in \mathbb{R}^B$ using the true labels $y_{k}^{t,e} \in \mathbb{R}^{B \times L}$. Then, the server computes the mini-batch loss $F_k(\theta^{t,e}_k)$ by averaging loss vector, i.e., $F_k(\theta^{t,e}_k)=\frac{1}{B}\sum_{b=1}^B\mathcal{L}_{vec,b}\in\mathbb{R}$. The server updates the model as
    \begin{align}
        \theta_{s,k}^{t,e+1}=\theta_{s,k}^{t,e}-\eta^t\nabla_{\theta_{s,k}^{t,e} }F_k(\theta_{k}^{t,e})
    \end{align}
    by performing a mini-batch SGD. The server transmits the gradient with respect to the received activation $\nabla_{z_{k}^{t,e}} F_k(\theta_k^{t,e}) = \frac{\partial F_k(\theta_k^{t,e})}{\partial z_{k}^{t,e}} \in \mathbb{R}^{B\times C\times U\times V}$ and importance vector $I_k^{t,e+1}$, described in Section \ref{sec: sparsification} later, to client $k$.

    \item \textbf{Client Update:} 
    Client $k$ uses the returned gradient $\nabla_{z_{k}^{t,e}} F_k(\theta_k^{t,e})$ to perform a local parameter update, producing new local parameters $\theta_{d,k}^{t,e+1}$ via
    \begin{align}
        \theta_{d,k}^{t,e+1}=\theta_{d,k}^{t,e}-\eta^t \nabla_{\theta_{d,k}^{t,e} }F_k(\theta_{k}^{t,e}),
    \end{align}
    where $ \nabla_{\theta_{d,k}^{t}}F_k(\theta_k^{t,e})=\left(\frac{\partial z_k^{t,e}}{\partial\theta_{d,k}^{t,e}}\right)^{\textsf{T}}\nabla_{z_k^{t,e}}F_k(\theta_k^{t,e})$.    
    After the updates, if $e<E-1$, the client $k$ and the server repeat the \emph{stages 1-4}. For $e=E-1$, the client $k$ and the server perform the \emph{stage 5}.

    \item \textbf{Transmission of Local Parameters and Importance}: 
    After completing the local updates, client $k$ transmits the updated local parameters $\theta_{d,k}^{t,E}$ to client $k+1$. Separately, the server transmits importance vector to client $k+1$ for use in the next iteration. When the model is updated by the last client $K-1$, the updated local 
    parameters are transmitted to client $k=0$, becoming $\theta_{d,0}^{t+1,0}$, while the server also forwards the corresponding importance vector to client $0$.
\end{enumerate}

After these $T$ iterations, the training process concludes. 

\section{Importance-Aware Class-
balanced\\Feature Sparsification}\label{sec: sparsification}

\subsection{Conventional Grad-CAM for Model Interpretability}
Grad-CAM is a gradient-based attribution method that identifies which regions of an input image most strongly influence a model’s prediction. It works by backpropagating the gradients of a target class score into a chosen intermediate convolutional layer and computing importance weights for each feature channel. These weights are then combined with the corresponding activation maps to generate a coarse localization map that highlights the image regions most relevant to the decision. Since Grad-CAM is applied to a fixed, pre-trained model, it assumes that the model parameters are already converged and thus measures the contribution of intermediate features to the final output without considering ongoing learning iterations.

The following procedure describes how Grad-CAM evaluates the relevance of a particular intermediate feature. Because the method is applied post-hoc to a pre-trained network, the training iteration index is omitted.

\begin{enumerate}
    \item \textbf{Feature Extraction:}  
    During the forward propagation, let $A_c \in \mathbb{R}^{U \times V}$ denote the layer feature map corresponding to the $c$-th feature channel of a selected intermediate layer, with $c \in \{1,\dots,C\}$. Here, $U$ and $V$ are the layer dimensions. Each feature channel encodes different semantic components of the input.

    \item \textbf{Gradient Calculation:}  
    For a target class $\ell$, compute the partial derivatives of the predicted score $\hat{y}_\ell$ with respect to each feature map:
    \begin{align}\label{eq:grad_cam}
        \frac{\partial \hat{y}_\ell}{\partial A_c} \in \mathbb{R}^{U \times V}.
    \end{align}
    These gradients quantify how changes in each feature map affect the predicted score for class $\ell$. Since each feature channel in CNN models captures different characteristics of the input in the training process, the gradients reflect the sensitivity of the class score to these distinct features, thereby indicating their relative importance.

    \item \textbf{Feature Channel-wise Importance Weighting:}  
    Aggregate the layer gradient information to produce a single importance coefficient for each feature channel:
    \begin{align}
        \alpha_{c,\ell} = \frac{1}{U V} \sum_{u=1}^{U} \sum_{v=1}^{V} \frac{\partial \hat{y}_\ell}{\partial A_c(u,v)}.
    \end{align}
    The scalar $\alpha_{c,\ell}$ represents the overall contribution of feature channel $c$ to the class score $\hat{y}_\ell$.

    \item \textbf{Saliency Map Generation:} The class-discriminative localization map, $P_{\ell} \in \mathbb{R}^{U \times V}$, is first computed as a weighted linear combination of the feature maps $A_c$ using their corresponding weights $\alpha_{c,\ell}$:
    \begin{align}
        P_{\ell} = \sum_{c=1}^{C} \alpha_{c,\ell} A_c.
    \end{align}
    To produce the final saliency map and focus only on features that have a positive influence on the prediction, a $\text{ReLU}$ activation is applied:
    \begin{align}
        H_{\ell} = \text{ReLU}\left(P_{\ell}\right).
    \end{align}
    The resulting $H_{\ell}$ is the saliency map used for visualization, highlighting the specific image regions that positively contributed to the prediction for class $\ell$.
\end{enumerate}
In section \ref{sec: numerical results}, these saliency maps are employed to validate the efficacy of our proposed framework.

\subsection{Importance-Aware Class-Balanced Sparsification}\label{subsec: sparsification}
\begin{algorithm}[t]
\caption{Proposed \textbf{ICS} SL framework}
\label{alg:sl-ics}
\begin{algorithmic}[1]
\Require Clients $\mathcal{K}=\{0,\dots,K-1\}$, rounds $T$, batch size $B$, ratio $R$, momentum $\beta$, learning rate $\eta^t$
\State Initialize $\theta_{d,0}^{0}$, $\theta_{s,0}^{0}$, $M_{k,\ell}^{0}\leftarrow0\in\mathbb{R}^{C}$, $\forall k,\ell$;
\For{$t=0$ \textbf{to} $T-1$}
  \For{$k=0$ \textbf{to} $K-1$} 
    \Statex \textbf{AT THE CLIENT-SIDE}
    \State Draw $\mathcal{B}_k^t$ of size $B$ \label{alg: 4}
    \State Obtain intermediate feature as $z_k^t \leftarrow f_d(x_k^t;\theta_{d,k}^t)$ \label{alg: 5}
    \State Sparsify $z_k^t$ using \eqref{eq:sparsification} \label{alg: 6}
    \State Transmit $(\tilde{z}_k^t, y_k^t)$ to server \label{alg: 7}
    \Statex \textbf{AT THE SERVER-SIDE}
    \State Obtain logit as $\hat{y}_k^t \leftarrow f_s(\tilde{z}_k^t;\theta_{s,k}^t)$ \label{alg: 8}
    \State Obtain $\nabla_{\tilde z_k^t}F_k(\theta_k^t)$ using \eqref{eq: chain rule} \label{alg: 9}
    \State Update server-side parameters as 
    \Statex \qquad\qquad$\theta_{s,k+1}^t \leftarrow \theta_{s,k}^t - \eta^t \nabla_{\theta_{s,k}^t}F_k(\theta_k^t)$ \label{alg: 10}
    \For{$b=1$ to $B$} \label{alg: 11}
        \State Update $\zeta_{k+1,b}^{t}$ using \eqref{eq: label-specific importance} \label{alg: 12}
    \EndFor \label{alg: 13}
    \For{$\ell=1$ to $L$} \label{alg: 14}
        \State Update $\zeta_{k+1,\ell}^{t}$ using \eqref{eq:class-avg} \label{alg: 15}
        \State Update $M_{k+1,\ell}^{t}$ using \eqref{eq:ema} \label{alg: 16}
        
    \EndFor \label{alg: 17}
    \State Obtain $I_{k+1}^{t}$ through \eqref{eq:importance_vector} \label{alg: 18}
    \State Transmit $\nabla_{\tilde z_k^t}F_k(\theta_k^t)$ to client $k$ \label{alg: 19} 
    \If{$k+1 \leq K-1$} \label{alg: 20}
        \State Transmit $I_{k+1}^{t}$ to client $k+1$ \label{alg: 21}
    \Else \label{alg: 22}
        \State Transmit $I_{k+1}^{t}$ to client $0$ \label{alg: 23}
    \EndIf \label{alg: 24}
    
    \Statex \textbf{AT THE CLIENT-SIDE}
    \State Update client-side parameters as \label{alg: 25}
    \Statex \qquad\qquad $\theta_{d,k+1}^t \leftarrow \theta_{d,k}^t - \eta^t \nabla_{\theta_{d,k}^t}F_k(\theta_{k}^t)$ 
    \If{$k+1 \leq K-1$} \label{alg: 26}
        \State Send $\theta_{d,k+1}^t$ to client $k+1$ \label{alg: 27}
    \Else \label{alg: 28}
        \State Send $\theta_{d,k+1}^t$ to client $0$ \label{alg: 29}
    \EndIf \label{alg: 30}
  \EndFor \label{alg: 31}
\EndFor \label{alg: 32}
\end{algorithmic}
\end{algorithm}
The intermediate feature outputs transmitted from clients to the server are typically high-dimensional, driving substantial communication cost. A naive remedy to reduce communication burden is to select top feature elements by magnitude, since large features often dominate the output. However, although high-magnitude features can significantly influence the overall output, they do not necessarily capture the performance-critical aspects. This is because the overall output reflects contributions from both the true label and non-target labels, meaning that high-magnitude features might include features unrelated to the true label, whereas low-magnitude features might include crucial features related to the true label.

To address this, we take inspiration from Grad-CAM, which explicitly measures the contribution of intermediate feature channels to the score of the true class. By leveraging such class-specific importance information, we can more reliably identify and preserve the feature components that are most critical for correct prediction. In standard usage, Grad-CAM is computed post-hoc by performing a full forward and backward pass through both client-side and server-side models to obtain gradients with respect to the intermediate features. However, in SL, the client only executes its forward pass before transmission and thus cannot compute the real-time Grad-CAM weights prior to sending intermediate features.

To circumvent this, we propose using the importance scores from the previous communication round as a proxy for the current round. In the following description, the numbers in bold parentheses \textbf{(Line $\cdot$)} indicate the corresponding line numbers in Algorithm~\ref{alg:sl-ics}. As described in Section~\ref{sec: system model}, client $k$ first draws a mini-batch and obtains the intermediate feature $z_k^t$\footnote{In Subsection \ref{subsec: sparsification}, for notational simplicity, we omit local iteration and describe a single forward-backward process per iteration $t$ at client $k$.} through client-side feature extraction \textbf{(Lines \ref{alg: 4}--\ref{alg: 5})}. Before transmission, client $k$ uses the importance vector $I_k^{t} \in \mathbb{R}^{C}$ received from the previous communication round. The client then applies a sparsification operator $\mathcal{S}(\cdot)$ that, for each sample $b$, ranks the entries of $z_{k,b}^t$ by leveraging the feature channel-level scores in $I_k^{t}$ and selects the top-$N$ elements \textbf{(Line \ref{alg: 6})}. The sparsification ratio is defined as $R = \frac{N}{C}$. The resulting transmitted intermediate feature is
\begin{align}\label{eq:sparsification}
    \tilde{z}_k^t = \mathcal{S}(z_k^t; I_k^{t}) \in \mathbb{R}^{B \times C \times U \times V},
\end{align}
where $\mathcal{S}$ preserves only $N$ feature channel entries according to the importance-weighted ranking and zeroes out the rest.

The client then transmits the sparsified intermediate feature $\tilde{z}_k^t$ to the server \textbf{(Line \ref{alg: 7})}. Upon receiving it, the server completes the forward pass and obtains $\hat{y}_k^t$ by processing $\tilde{z}_k^t$ through its subsequent layers, parameterized by $\theta_{s,k}^t$, to generate the output logits \textbf{(Line \ref{alg: 8})}. Let $\hat{y}_{k,b,\ell}^{t}$ denote the logit for class $\ell$ corresponding to the $b$-th sample in the mini-batch of client $k$ at iteration $t$.

To train the model, the server computes a loss function using cross-entropy by comparing the predicted logits with the ground-truth labels:
\begin{align}
    \mathcal{L}(\hat{y}_{k,b}^t, y_{k,b}^t) = -\sum_{\ell=1}^L y_{k,b,\ell}^{t}  \log \left( \frac{\exp(\hat{y}_{k,b,\ell}^{t})}{\sum_{j=1}^{L} \exp(\hat{y}_{k,b,j}^{t})} \right). \label{eq: loss_function}
\end{align}

During backpropagation, the gradient of the loss with respect to the intermediate feature is computed using the chain rule. For the gradient of mini-batch sample $x_{k,b}^t$ in $\tilde{z}_{k,b}^t$, denoted as $\frac{\partial F_k(\theta_k^t)}{\partial \tilde{z}_{k,b}^t}$, we have \textbf{(Line \ref{alg: 9})}
\begin{align}\label{eq: chain rule}
    \frac{\partial F_k(\theta_k^t)}{\partial \tilde{z}_{k,b}^t} = \frac{1}{B} \sum_{\ell=1}^L \frac{\partial \mathcal{L}}{\partial \hat{y}_{k,b,\ell}^{t}} \frac{\partial \hat{y}_{k,b,\ell}^{t}}{\partial \tilde{z}_{k,b}^t}.
\end{align}
Then, the server updates the model by performing a mini-batch SGD \textbf{(Line \ref{alg: 10})}.
In this formulation, $\frac{\partial \hat{y}_{k,b,\ell}^{t}}{\partial \tilde{z}_{k,b}^t}\in\mathbb{R}^{C\times U\times V}$ is equivalent to calculating the Grad-CAM over $C$ feature channels, which quantifies the sensitivity of the logit for class $\ell$ at the layer information $(u,v)$ of sample $b$. 

To derive a label-specific importance measure for each feature channel, the computed gradients are aggregated over the layer dimensions \textbf{(Lines \ref{alg: 11}--\ref{alg: 13})}:
\begin{align}\label{eq: label-specific importance}
    \zeta_{k+1,b}^{t} = \frac{1}{W} \sum_{u=1}^{U}\sum_{v=1}^{V} \frac{\partial \hat{y}_{k,b,\ell_b}^{t}}{\partial \tilde{z}_{k,b}^t(u,v)}\in\mathbb{R}^{C},
\end{align}
where $\hat{y}_{k,b,\ell_b}^{t}$ denotes the logit corresponding to the true class $\ell_b$ for the $b$-th sample of client $k$ at time $t$. $W = U \times V$ is the normalization factor, and $U$ and $V$ denote the layer dimensions of $\tilde{z}_k^t$. This aggregation yields a feature channel-wise importance score that reflects the contribution of each feature channel to the prediction of the true class.

As the unified importance vector aims to capture intermediate feature channels that are universally significant across all labels, it is ideally computed under an i.i.d. assumption regarding class distributions. However, the data held by clients in SL environments typically exhibit non-i.i.d. characteristics, leading to potentially biased estimations if naively averaged across samples.

To address this challenge, we first compute the class-specific importance vectors by averaging per-sample importance vectors obtained in \eqref{eq: label-specific importance} over the corresponding subsets of samples belonging to each label. Formally, for client $k$ at iteration $t$, we define the subset of samples with label $\ell$ as $\mathcal{B}_{k,\ell}^{t} = \{x_{k,b}^t \mid y_{k,b,\ell}^{t} = 1\}$ and compute the class-specific importance vector $\zeta_{k+1,\ell}^{t} \in \mathbb{R}^{C}$ as \textbf{(Line \ref{alg: 15})}
\begin{align}\label{eq:class-avg}
    \zeta_{k+1,\ell}^{t}=
    \frac{1}{B_{k,\ell}^{t}}
    \sum_{b\in\mathcal{B}_{k,\ell}^{t}}
    \zeta_{k+1,b}^{t},
\end{align}
where $B_{k,\ell}^{t} = |\mathcal{B}_{k,\ell}^{t}|$.
To ensure temporal stability and mitigate abrupt changes between consecutive iterations, we maintain an exponential moving average (EMA) memory, denoted as $M_{k,\ell}^{t}\in\mathbb{R}^{C}$, for each client $k$ and label $\ell$. The EMA memory is updated using a momentum coefficient $\beta\in(0,1)$ as follows \textbf{(Line \ref{alg: 16})}: 
\begin{align}\label{eq:ema}
    M_{k+1,\ell}^{t}=\beta M_{k,\ell}^{t}+
    (1-\beta)\zeta_{k+1,\ell}^{t},
\end{align}  
where $M$ is initialized with $0$ at the beginning.

Finally, the importance vector is the uniform average of the current EMA memories across all classes \textbf{(Line \ref{alg: 18})}:
\begin{align}\label{eq:importance_vector}
    I_{k+1}^{t}=\frac{1}{|\mathcal{P}_{k+1}^{t}|}
    \sum_{\ell\in\mathcal{P}_{k+1}^{t}}
    M_{k+1,\ell}^{t},
\end{align}
where $\mathcal{P}_{k+1}^{t}=\bigl\{\ell\mid M_{k+1,\ell}^{t}\neq0\}.$
This averaging scheme balances potential class imbalance inherent in non-i.i.d. settings by assigning equal weight to every class that has contributed at least once to the feature channel importance scores. After computing the aggregated importance vector $I_{k+1}^{t}$, the server returns the activation gradient $\nabla_{\tilde z_k^t}F_k(\theta_k^t)$ to client $k$ for client-side backpropagation and forwards $I_{k+1}^{t}$ to client $k+1$ for the next training step \textbf{(Lines \ref{alg: 19}--\ref{alg: 21})}. Note that when the model is updated by the last client $K-1$, the server forwards $I_{0}^{t+1}$ to client $0$ for the next iteration \textbf{(Line \ref{alg: 23})}. Meanwhile, client $k$ uses the returned activation gradient $\nabla_{\tilde z_k^t}F_k(\theta_k^t)$ to update the client-side model parameters, resulting in $\theta_{d,k+1}^t$ \textbf{(Line \ref{alg: 25})}. After the client-side update, the updated client-side model parameters are forwarded to the next client in the sequential order, or to client $0$ if $k=K-1$ \textbf{(Lines \ref{alg: 26}--\ref{alg: 30})}.
Fig. \ref{fig: System model} provides an overall framework of our proposed scheme.
\begin{remark}\label{remark: architecture_agnostic}
Although \textbf{ICS} is described above using CNN intermediate features of shape $\mathbb{R}^{B\times C\times U\times V}$, its importance estimation does not particularly rely  on  convolution-specific operation. It  requires only the gradient of the true-class logit with respect to the intermediate feature, aggregated over the non-feature axes to obtain a feature-wise importance score. Hence, \textbf{ICS} applies to any architecture in which a sparsifiable feature dimension can be identified and the remaining axes can be pooled. For a transformer-based model with a token embedding tensor $z\in\mathbb{R}^{B\times d_{TOK}\times d_{EMB}}$, the embedding dimension plays the role of the CNN feature channel and the importance is aggregated over the token axis, as empirically validated in Section~\ref{sec: numerical results}.
\end{remark}

\section{Theoretical Analysis}
\subsection{Communication Overhead Analysis}
\label{subsec:overhead_analysis}
We first analyze the uplink overhead, since reducing client-side communication burden is particularly important in wireless SL. In conventional SL, each client transmits the full intermediate feature $z_k^t\in\mathbb{R}^{B\times C\times U\times V}$, yielding a per-client uplink overhead of $\mathcal{O}(BCUV)$. With the sparsification ratio $R=\frac{N}{C}$, transmitting only $N$ out of $C$ feature channels reduces this to $\mathcal{O}(BRCUV)$.

For communication-efficient SL methods whose selection pattern is determined at the client side \cite{zhang2024federated, zhou2024mask, zheng2023reducing}, the server cannot infer the retained channels from the received features alone. Hence, additional mask index information must be transmitted. Since the sparsification pattern differs across samples, the index overhead scales with the mini-batch size, yielding $\mathcal{O}(BRCUV+BRC\log C)$. For \textbf{SplitFC} \cite{oh2025communication}, the selection is made over the $CUV$ feature positions rather than the feature channels, so its index is generated over $CUV$ positions, yielding $\mathcal{O}(BRCUV+RCUV\log(CUV))$. \textbf{FedLite} \cite{wang2022fedlite} instead transmits a codebook together with the codeword indices of the clustered sub-vectors. With $Q$ centroids and $n_{sub}$ sub-vectors per sample, the codebook requires $\mathcal{O}(Qd_{sub})$ and the per-sample indices require $\mathcal{O}(Bn_{sub}\log Q)$, yielding a per-client uplink overhead of $\mathcal{O}(Qd_{sub}+Bn_{sub}\log Q)$, where $d_{sub}$ is the sub-vector dimension.

In contrast, \textbf{ICS} uses the server-generated importance vector $I_k^t\in\mathbb{R}^{C}$, which is shared with the client before sparsification. Since the client selects the retained elements using a deterministic top-$N$ rule based on $I_k^t$, the server can infer the retained elements directly from $I_k^t$. Thus, under the same sparsification ratio, \textbf{ICS} achieves a per-client uplink overhead of $\mathcal{O}(BRCUV)$ without transmitting any additional mask index information.

\begin{table}[t]
\caption{Runtime comparison of communication-efficient SL methods.}
\label{tab:runtime_overhead}
\centering
\renewcommand{\arraystretch}{1.2}
\begin{tabular}{lcc}
\hline
\textbf{Method} 
& \makecell{\textbf{Client-side}\\ \textbf{sparsification (msec/batch)}}
& \makecell{\textbf{Server-side additional}\\ \textbf{runtime (msec/batch)}} \\ \hline
RS  & 0.4133 & 0.0 \\
TS  & 0.5099 & 0.0 \\
RTS  & 0.5357 & 0.0 \\ 
FedLite  & 31.7882 & 0.0 \\
SplitFC  & 0.8106 & 0.0 \\
\textbf{ICS}  & 0.0928 & 6.1702 (effectively 0.0) \\\hline
\end{tabular}
\end{table}
\subsection{Computational Complexity Analysis}
All methods share the same vanilla SL forward and backward pass on the client-side and server-side, whose costs are denoted by $\mathcal{C}_d$ and $\mathcal{C}_s$, respectively. Thus, we omit them and focus on the additional preprocessing for feature selection, which consists of 1) computing the selection scores; 2) selecting the retained features. The gathering of selected features into the retained tensor is also omitted as common to all methods.

\textbf{RS} \cite{zhang2024federated} randomly generates the mask without accessing the feature values, requiring only $\mathcal{O}(C)$ to select $N$ out of $C$ feature channels. \textbf{TS} \cite{zhou2024mask} and \textbf{RTS} \cite{zheng2023reducing} scan the intermediate feature at a cost of $\mathcal{O}(BCUV)$, then rank the per-channel scores for each sample, yielding $\mathcal{O}(BCUV+BC\log C)$. \textbf{SplitFC} \cite{oh2025communication} computes feature-vector-wise statistics at a cost of $\mathcal{O}(BCUV)$ and then selects over the $CUV$ feature elements, yielding $\mathcal{O}(BCUV+CUV\log(CUV))$. Since its selection is made over feature elements rather than feature channels, its selection cost scales with $CUV$ rather than $C$, making it heavier than the feature channel-wise methods. \textbf{FedLite} \cite{wang2022fedlite} applies $K$-means clustering to the intermediate feature, comparing every sub-vector against $Q$ centroids over $J$ iterations, yielding $\mathcal{O}(QJBCUV)$. Its iterative clustering thus makes it the most expensive, exceeding the selection-based methods by a factor $(QJ)$.

In contrast, \textbf{ICS} applies a deterministic top-$N$ rule to the length-$C$ importance vector $I_k^t$, so its additional client-side cost is only $\mathcal{O}(C)$, far below the $\mathcal{O}(BCUV)$ scan of the other methods. Although this matches the order of \textbf{RS}, \textbf{ICS} is even lighter in practice, since the client only reads the server-generated vector $I_k^t$ without generating a mask from the features. The importance computation is instead shifted to the server, which reuses the forward-pass logits to compute the importance measure at a cost $\mathcal{C}_{IM}$. While this estimation is performed sequentially after the server-side update in our implementation, this estimation does not depend on the updated parameters and can thus be run in parallel with the update. Since it backpropagates only the true-class logit to the intermediate feature, it traverses a strictly shorter path than the full-loss backward of the update.  The subsequent spatial pooling, class-wise averaging, and EMA updates operate only on the $C$ feature channels and incur minimal computational overhead, making their cost negligible compared with that of the neural network. The importance estimation is therefore completed within the update and fully hidden behind it, reducing its effective additional runtime to zero. This allocation aligns with the SL principle, keeping the resource-constrained client nearly free of selection overhead while delegating the heavier importance computation to the resourceful server.

To complement this analysis, we measure the actual runtime overhead of each method under the setting in Section~\ref{sec: numerical results}. Table~\ref{tab:runtime_overhead} reports the per-batch client- and server-side overhead, excluding the common forward and backward computations so that only the method-specific cost is compared. 

\subsection{Convergence Analysis}\label{sec: convergence analysis}
For theoretical derivation, we make the following assumptions typically made in analyzing the distributed learning family\cite{li2023convergence,kim2025communication, huh2025federated}.
\begin{assumption}\label{as: sparsification}
    Let $\mathcal{S} : \mathbb{R}^{C} \rightarrow \mathbb{R}^{C}$ be a proposed sparsification operator and $N$ be the number of non-zero elements satisfying $0 < N < C.$ Then, for all $x \in \mathbb{R}^{C}$, $\mathcal{S}$ is an unbiased operator and its variance satisfies $\mathbb{E}\left[\|\mathcal{S}(x)-x\|_2^2\right]\leq\left(1-\frac{N}{C}\right)\|x\|_2^2.$
\end{assumption}

\begin{assumption}\label{as: smooth}
    For all $\theta$, the local objective function $F_k(\theta)$ is $S$-smooth, i.e., $\|\nabla F_k(\theta)- \nabla F_k(\theta')\|_2\leq S\|\theta-\theta'\|_2,$
    thereby $F_k(\theta')\leq F_k(\theta)+\nabla F_k(\theta)^\emph{\textsf{T}}(\theta'-\theta)+\frac{S}{2}\|\theta'-\theta\|_2^2,$
    where $\nabla F_k(\theta_k) = [\nabla_{\theta_{d}}F_k(\theta_k) \| \nabla_{\theta_{s}}F_k(\theta_k)]$. The global objective function $F(\theta)$, being the average of the local objectives, is then also $S$-smooth, and lower bounded as $F(\theta) \geq F(\theta^*)$ $\forall\theta$.
\end{assumption}

\begin{assumption}\label{as:local gradient unbiasedness}
The stochastic gradient at each client is unbiased such as $\mathbb{E}[\nabla F_{k}(\mathcal{B}^e_k;\theta)] = \nabla F_k(\theta)$ and its variance is bounded as $\mathbb{E}[\|\nabla F_{k}(\mathcal{B}^e_k;\theta)-\nabla F_k(\theta)\|_2^2]\leq \frac{\sigma^2}{B}$ for all $\theta$, client $k$ and local steps $e$ with a constant $\sigma^2>0$.
\end{assumption}

\begin{assumption}\label{as: client heterogeneity}
    There exists a constant $\psi^2$ such that $\frac{1}{K}\sum_{k=0}^{K-1}\|\nabla F_k(\theta)-\nabla F(\theta)\|_2^2\leq\psi^2$. 
\end{assumption}

\begin{assumption}\label{as: norm boundedness}
The expected squared norm of the gradient and intermediate feature at each client are upper bounded as $\mathbb{E}[\|\nabla F_{k}(\theta_k^e)\|_2^2] \leq G^2$ and $\mathbb{E}[\|z_{k,b}^e\|_2^2]\leq \delta^2$ for all client $k$, local steps $e$, sample $b$, respectively.
\end{assumption}

\begin{theorem}\label{theorem: convergence}
    Under \textbf{Assumptions \ref{as: sparsification}, \ref{as: smooth}, \ref{as:local gradient unbiasedness}, \ref{as: client heterogeneity}, \ref{as: norm boundedness}} and given the sparsification ratio $R=\frac{N}{C}$, proposed SL with learning rate $\eta^t=\frac{1}{KE\sqrt{T}}$ has a convergence rate as below:
    \begin{align}
        &\mathbb{E}\left[\frac{1}{T}\sum_{t=0}^{T-1}\|\nabla F(\theta^t)\|_2^2\right]\leq \frac{2}{\sqrt{T}}\mathbb{E}[F(\theta^{0})-F(\theta^{*})]+ 2\psi^2\nonumber\\
        &+ \frac{2S^2}{3T}\left(\frac{\sigma^2}{B}+G^2\right)
        +\frac{2S}{\sqrt{T}}\Bigg(\frac{\sigma^2}{B}+G^2+\Lambda^2\left(1-R\right)B\delta^2\Bigg),\nonumber
\end{align}
where $\Lambda^2=\Lambda_0^2(\Lambda_1^2+\Lambda_2^2\Lambda_3^2)$. Here, $\Lambda_0$, $\Lambda_1$, $\Lambda_2$ and $\Lambda_3$ are spectral norm upper-bound of $\frac{\partial\mathcal{L}}{\partial \hat{y}_{k,b}^{t,e}}$, $\frac{\partial^2\hat{y}^{t,e}_{k,b,\ell}}{\partial z_k^{t,e}\partial\theta_{s,k}^{t,e}}$, $\frac{\partial^2 \hat{y}^{t,e}_{k,b,\ell}}{\partial \left(z_k^{t,e}\right)^2}$ and $\frac{\partial z_k^{t,e}}{\partial \theta_{d,k}^{t,e}}$ $\forall k,t,\ell$, respectively.
\end{theorem}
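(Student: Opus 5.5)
We follow the standard one-step descent argument for local SGD with compressed updates, treating one global round $t$ as the concatenation of the $KE$ sequential local steps over all clients. Write the global iterate as $\theta^{t}$ and the per-round displacement as $\theta^{t+1}-\theta^{t}=-\eta^t\sum_{k=0}^{K-1}\sum_{e=0}^{E-1} g_k^{t,e}$. Here $g_k^{t,e}$ is the gradient actually used, computed at the sparsified feature $\tilde z_k^{t,e}$. We decompose $g_k^{t,e}=\nabla F_k(\mathcal{B}_k^{t,e};\theta_k^{t,e})+m_k^{t,e}$, where $m_k^{t,e}$ is the sparsification-induced update error from Table~\ref{tab:notation}. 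Applying the $S$-smoothness inequality of Assumption~\ref{as: smooth} to $F$ gives
\[
F(\theta^{t+1})\le F(\theta^t)-\eta^t\nabla F(\theta^t)^{\textsf{T}}\sum_{k,e}g_k^{t,e}+\tfrac{S}{2}(\eta^t)^2\Big\|\sum_{k,e}g_k^{t,e}\Big\|_2^2 .
\]

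\textbf{Step 1: the inner-product term.} We take conditional expectation. By Assumption~\ref{as:local gradient unbiasedness} the mini-batch gradient has mean $\nabla F_k(\theta_k^{t,e})$. By the unbiasedness in Assumption~\ref{as: sparsification}, the first-order part of $m_k^{t,e}$ has zero mean, and its second-order remainder is absorbed into the variance term of Step 3. We then write $\nabla F_k(\theta_k^{t,e})=\nabla F(\theta^t)+(\nabla F_k(\theta^t)-\nabla F(\theta^t))+(\nabla F_k(\theta_k^{t,e})-\nabla F_k(\theta^t))$ and use $-2a^{\textsf{T}}b\le -\|a\|^2+\|a-b\|^2$. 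This yields a $-\tfrac{\eta^tKE}{2}\|\nabla F(\theta^t)\|^2$ descent term. Assumption~\ref{as: client heterogeneity} bounds the heterogeneity part by $\psi^2$. Smoothness bounds the drift part by $S^2\|\theta_k^{t,e}-\theta^t\|^2$.

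\textbf{Step 2: drift.} Since $\theta_k^{t,e}-\theta^t$ is a sum of at most $KE$ previous steps of size $\eta^t$, Assumptions~\ref{as:local gradient unbiasedness} and~\ref{as: norm boundedness} give $\mathbb{E}\|\theta_k^{t,e}-\theta^t\|^2\lesssim (\eta^t)^2 j^2(\sigma^2/B+G^2)$ at the $j$-th step. Summing $j^2$ over $j\le KE$ produces the factor $(KE)^3/3$. With $\eta^t=1/(KE\sqrt T)$ this becomes the $\frac{2S^2}{3T}(\sigma^2/B+G^2)$ term.

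\textbf{Step 3: the quadratic term and the sparsification error (main obstacle).} Expanding $\|\sum g\|^2$ gives contributions $\sigma^2/B+G^2$ plus $\mathbb{E}\|m_k^{t,e}\|^2$, and we must bound the latter. The error splits across the two parameter blocks.

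For the server block, $\nabla_{\theta_s}$ evaluated at $\tilde z$ versus $z$ differs by at most $\Lambda_0\Lambda_1\|\tilde z-z\|$. This follows from a mean-value argument on $\partial\mathcal L/\partial\hat y\cdot\partial\hat y/\partial\theta_s$, with the loss-derivative norm bounded by $\Lambda_0$.

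For the client block, the returned activation gradient changes by at most $\Lambda_0\Lambda_2\|\tilde z-z\|$, and it is multiplied by the Jacobian $\partial z/\partial\theta_d$ of norm at most $\Lambda_3$. This gives $\Lambda_0\Lambda_2\Lambda_3\|\tilde z-z\|$.

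Adding the squared block contributions gives $\mathbb{E}\|m_k^{t,e}\|^2\le\Lambda^2\,\mathbb{E}\|\tilde z_k^{t,e}-z_k^{t,e}\|^2$. Applying Assumption~\ref{as: sparsification} per sample and summing over $b$, Assumption~\ref{as: norm boundedness} gives $\le\Lambda^2(1-R)B\delta^2$. The factor $B$ is kept because the mini-batch feature is the stacked tensor; we do not divide it out, which produces the batch-size/ratio trade-off.

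The delicate point is justifying the Lipschitz-in-$z$ bounds through second derivatives while treating the $1/B$ averaging consistently. I would handle it by bounding per-sample gradients and using Cauchy–Schwarz over the batch, accepting a loose constant.

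\textbf{Step 4: telescoping.} We sum the one-step inequality over $t=0,\dots,T-1$ and divide by $T\eta^tKE/2=\sqrt T/2$. Using $F(\theta^T)\ge F(\theta^*)$ gives the $\frac{2}{\sqrt T}\mathbb{E}[F(\theta^0)-F(\theta^*)]$ term, and the heterogeneity term becomes $2\psi^2$. The quadratic term, scaled by $S(\eta^t)^2(KE)^2/(\eta^tKE)\cdot 2$, becomes $\frac{2S}{\sqrt T}\big(\sigma^2/B+G^2+\Lambda^2(1-R)B\delta^2\big)$. Together with the drift term from Step 2, this is exactly the stated bound.
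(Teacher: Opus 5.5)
Your proposal follows the paper's proof essentially step by step. It uses the same smoothness expansion and the same polarization identity, with the factor-2 split into a $\psi^2$ heterogeneity part and an $S^2$ drift part. The drift is bounded by unrolling the sequential updates, with $\sum_j j^2\le (KE)^3/3$. The quadratic term gets the same $KE$-fold Cauchy--Schwarz and factor-2 split. The sparsification error gets the same mean-value/spectral-norm bound $\mathbb{E}\|m_k^{t,e}\|_2^2\le\Lambda^2(1-R)B\delta^2$, and the telescoping with $\eta^t=1/(KE\sqrt{T})$ is the same.

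The one place you go beyond the paper does not work as written. The second-order remainder of $m_k^{t,e}$ cannot be ``absorbed into the variance term of Step~3.'' It enters through the cross term $-\eta^t\nabla F(\theta^t)^{\textsf{T}}\mathbb{E}[m_k^{t,e}]$, which is first order in $\eta^t$. After dividing by $\eta^tKE/2$ it leaves a $T$-independent term, not an $O(1/\sqrt{T})$ one that could merge with the $\frac{2S}{\sqrt{T}}(\cdot)$ term.

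The paper never justifies this step either. It simply replaces the sparsified stochastic gradient by $\nabla F_k(\theta_k^{t,e})$ in expectation. It also omits $m_k^{t,e}$ from the drift, as you do in Step~2. To land exactly on the stated bound, state that implicit assumption $\mathbb{E}[m_k^{t,e}]=0$ explicitly rather than relying on the absorption argument.
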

\begin{IEEEproof}
    See Appendix \ref{apdx of theorem: convergence}.
\end{IEEEproof}
The bound yields two practical insights. First, as in FL, client data heterogeneity directly limits convergence. In particular, the term $2\psi^{2}$ does not vanish with $T$. Second, unlike FL, the uplink overhead in SL scales with the mini-batch size $B$ because each round transmits $B$ intermediate features. Under a fixed communication budget, increasing $B$ reduces the stochastic gradient variance $\frac{\sigma^{2}}{B}$ but forces a lower sparsification ratio $R$, which increases the sparsification error term $\Lambda^{2}(1-R)\,B\,\delta^{2}$. Hence, a larger $B$ is not always better in SL, and choosing $B$ carefully is crucial. Empirical evidence for this trade-off appears in Section~\ref{sec: numerical results}.
\begin{remark}
 Our convergence analysis is established under mini-batch SGD, where the effect of sparsification enters through a single gradient-variance term. Adaptive optimizers such as Adam introduce additional moment-based dynamics, which require a different technical treatment and therefore fall outside the scope of our SGD-focused analysis. However, importantly, this does not affect the main insight: the variance–sparsification tradeoff identified in our analysis is optimizer-independent and also arises when Adam is used, because any sparsification-induced deviation in the stochastic gradient propagates through the update rule regardless of how the optimizer normalizes it. Moreover, the proposed sparsification mechanism is fully compatible with adaptive optimizers, as it operates directly on intermediate feature tensors and does not depend on optimizer-specific structures.
\end{remark}

\section{Extension to Parallel Split Learning}\label{sec:psl_extension}
The system model in Section~\ref{sec: system model} considers the standard sequential SL protocol, where clients communicate with the server one at a time. In this setting, the importance vector computed at the server after processing client $k$ is transmitted to the next client $k+1$ and used for sparsifying its intermediate features. However, this sequential importance-forwarding rule is not directly applicable to PSL, where multiple clients upload their intermediate features to the server within the same communication round. In this section, we extend the proposed \textbf{ICS} framework to PSL. The main difference lies in how the importance vector is constructed and shared. Instead of maintaining a client-wise importance vector that follows the sequential client order, the server maintains a shared iteration-wise importance vector $I^{t}\in\mathbb{R}^{C}$, computed by aggregating Grad-CAM-based importance scores over the mini-batches of all participating clients. This shared vector is then broadcast to all clients and used for sparsification in the next communication round.

Each client $k\in\mathcal{K}$ draws a
mini-batch $\mathcal{B}_{k}^{t}$ of size $B$ at iteration $t$ and computes the intermediate feature as $z_{k}^{t} = f_{d}(x_{k}^{t};\theta_{d,k}^{t})
    \in \mathbb{R}^{B\times C\times U\times V}.$
Using the shared importance vector received from the previous round $I^{t}\in\mathbb{R}^C$, client $k$ sparsifies its intermediate feature as
\begin{align}
    \tilde{z}_{k}^{t} = \mathcal{S}(z_{k}^{t}; I^{t})
    \in \mathbb{R}^{B\times C\times U\times V},
    \label{eq:psl_sparsification}
\end{align}
where $\mathcal{S}(\cdot)$ is the sparsification operator defined in Section~\ref{subsec: sparsification}. The sparsified intermediate feature $\tilde{z}_{k}^{t}$ and the corresponding labels are then transmitted to the server.

Upon receiving the sparsified intermediate features from all participating clients, the server completes the forward pass for each client as 
\begin{align}
    \hat{y}_{k}^{t} = f_{s}(\tilde{z}_{k}^{t};\theta_{s}^{t})
    \in \mathbb{R}^{B\times L}.
\end{align}
Using the same loss function in \eqref{eq: loss_function}, the server
computes the client-wise loss $\mathcal{L}(\hat{y}_{k,b}^t,y_{k,b}^t)$, from which it obtains the server-side gradient $\nabla_{\theta_{s,k}^t}F_k(\theta_{k}^{t})$ and the activation gradient $\nabla_{\tilde{z}_{k}^{t}}F_k(\theta_k^t)$. The server-side gradients are aggregated to update the server-side model as
\begin{align}
    \theta_s^{t+1} = \theta_s^t-\eta^t\frac{1}{K}\sum_{k\in\mathcal{K}}\nabla_{\theta_{s}^t}F_k(\theta_{k}^{t}).
    \label{eq:psl_server_update}
\end{align}
The server also updates the shared importance vector for the next iteration $t+1$. Unlike the sequential SL setting, where class-specific importance is updated from the mini-batch of a single client, PSL allows the server to update class-specific importance using the mini-batches of all participating clients. Following the notation in Section~\ref{subsec: sparsification}, $\mathcal{B}_{k,\ell}^{t}$ denotes the subset of samples with label $\ell$ in the mini-batch of client $k$. Then, the total number of samples with label $\ell$ across the participating clients is given by $B_{\ell}^{t} = \sum_{k\in \mathcal{K}} |\mathcal{B}_{k,\ell}^{t}|.$

For each sample $x_{k,b}^{t}\in\mathcal{B}_{k,\ell}^{t}$, the server first computes the sample-wise importance vector following
\eqref{eq: label-specific importance}. The class-specific importance vector for PSL is then obtained by averaging over all samples of class $\ell$ across the participating clients:
\begin{align}
    \zeta_{\ell}^{t+1}=\frac{1}{B_{\ell}^{t}}\sum_{k\in\mathcal{K}}\sum_{x_{k,b}^{t}\in\mathcal{B}_{k,\ell}^{t}}\zeta_{k,b}^{t+1}. 
\end{align}
This aggregation uses the collection of samples
available at the server in the same PSL iteration, thereby providing a more balanced estimate of class-specific feature importance than a client-wise estimate under non-i.i.d. data distributions.
As in Section~\ref{subsec: sparsification}, to ensure temporal stability, the server maintains an EMA memory $M_{\ell}^{t}\in\mathbb{R}^{C}$ for each class $\ell$ and updates it as
\begin{align}
    M_{\ell}^{t+1}=\beta M_{\ell}^{t}+(1-\beta)\zeta_{\ell}^{t+1},\label{eq:psl_ema}
\end{align}
where $\beta\in(0,1)$ is the momentum coefficient and $M_\ell^t$ is initialized with $0$ at the beginning $t=0$. Finally, the shared class-balanced importance vector is obtained by averaging the EMA memories over the observed classes:
\begin{align}
    I^{t+1}=\frac{1}{|\mathcal{P}^{t+1}|}\sum_{\ell\in\mathcal{P}^{t+1}}M_{\ell}^{t+1},
    \label{eq:psl_importance_vector}
\end{align}
where $\mathcal{P}^{t+1}=\{\ell\mid M_{\ell}^{t+1}\neq0\}$.
After the server-side computation, the server sends the activation gradient $\nabla_{\tilde{z}_{k}^{t}}F_k(\theta_k^t)$ individually to each client $k\in\mathcal{K}$ for client-side model updates. Each client then updates its client-side model parameters in parallel as
\begin{align}
    \theta_{d,k}^{t+1}=\theta_{d,k}^{t}-\eta^t\nabla_{\theta_{d,k}^t}F_k(\theta_k^t),\label{eq:psl_client_update}
\end{align}
where $ \nabla_{\theta_{d,k}^{t}}F_k(\theta_k^t)=\left(\frac{\partial z_k^{t}}{\partial\theta_{d,k}^{t}}\right)^{\textsf{T}}\nabla_{\tilde{z}_k^{t}}F_k(\theta_k^t)$.
In addition, the server broadcasts the shared importance vector $I^{t+1}$ to all clients, which is used for sparsifying intermediate features in the next iteration $t+1$. 

Next, for the client-side updates synchronization, the server aggregates the updated client-side parameters as
\begin{align}
    \theta_d^{t+1}
    =
    \frac{1}{K}
    \sum_{k\in\mathcal{K}}
    \theta_{d,k}^{t+1}.
    \label{eq:PSL_client_aggregation}
\end{align}
The aggregated client-side model $\theta_d^{t+1}$ is then distributed to all clients for the next iteration. Note that this client-side synchronization step does not change the proposed ICS sparsification and importance-update procedures.

\section{Numerical Results}\label{sec: numerical results}
In this section, we present simulation results to demonstrate the effectiveness of the proposed \textbf{ICS} framework. All experiments are conducted using Python 3.8 on an Ubuntu server equipped with NVIDIA GeForce RTX 3090 GPUs. The evaluation is conducted on CIFAR-100 \cite{krizhevsky2009learning}, which contains 60,000 color images of size $32\times32$ grouped into 100 fine-grained classes (with $600$ images per class). Among these, 50,000 images are used for training and 10,000 for testing. The 100 classes are further organized into 20 superclasses, each comprising five semantically related classes. 

In i.i.d. setting, the training data is evenly and randomly distributed across all clients, ensuring that each client possesses a representative subset of the entire dataset. Moreover, in non-i.i.d. setting, the training data is partitioned across 10 local clients using a Dirichlet distribution. Specifically, for each class $\ell$, we sample a probability vector $\boldsymbol{p}_\ell \sim \mathrm{Dirichlet}(\alpha)$ and assign the samples of class $\ell$ to client $k$ in proportion to $p_{\ell,k}$. This procedure induces varying class compositions across clients controlled by the concentration parameter $\alpha$, i.e., smaller $\alpha$ yields more skewed, heterogeneous splits. 

Mini-batches of size 512 are drawn independently at each client. Both client and server-side models are optimized using SGD with an initial learning rate of 0.01, momentum of 0.9, and weight decay of $5\times10^{-4}$. A cosine annealing schedule is applied to decay the learning rate. For reproducibility, random seeds are fixed for data partitioning and model initialization. The client-side model is a lightweight ResNet variant comprising two residual layers, totaling approximately 675K parameters. The server-side model employs a deeper ResNet-style architecture with three groups of residual blocks, amounting to roughly 19.94M parameters. Unless otherwise specified, the sparsification ratio is set to $R=0.2$.

Regarding the baseline schemes in the simulation, we consider the following frameworks. To ensure a comprehensive comparison, we select one representative method from each major category of communication-efficient SL feature selection, namely random sampling, magnitude-based selection and its randomized variant, feature-statistics-based selection, and clustering-based compression.For \textbf{FedLite}, the compressed payload consists of codebooks and codewords generated by clustering intermediate features. Since its discrete compression does not always allow an exact match to the target budget, we select the configuration with the closest communication cost that is no smaller than that of \textbf{ICS}. Thus, \textbf{FedLite} is allowed to use an equal or slightly larger communication budget, making the comparison conservative in favor of the baseline.
\begin{itemize}
    \item \textbf{TS}~\cite{stich2018sparsified, zhou2024mask}: Each client selects and transmits only the elements of the feature with the largest magnitudes. Specifically, after forwarding the client-side model, clients identify the top-$N$ elements with the highest absolute values and sparsify the rest. 
    \item \textbf{RS}~\cite{stich2018sparsified, zhang2024federated}: Each client selects and transmits only the elements of the feature randomly without considering their magnitudes or positions within the vectors. 
    \item \textbf{RTS}~\cite{zheng2023reducing}: Each client selects and transmits a subset of the features by combining deterministic top-$N$ selection with random sampling. Specifically, after forwarding the client-side model, clients first identify top-$N$ elements with the highest absolute values and then, with a controlled probability, randomly select a few additional elements from the remaining elements. 
    \item \textbf{FedLite}~\cite{wang2022fedlite}: Each client compresses intermediate features by clustering similar features and transmitting the corresponding codebook and codewords to the server. In our implementation, we use its activation-clustering-based compression component as a baseline, without applying the additional gradient correction scheme, in order to focus on uplink intermediate feature compression and for fair comparisons. 
    \item \textbf{SplitFC}~\cite{oh2025communication}: Each client performs standard-deviation-based adaptive feature sparsification. Specifically, clients compute the standard-deviation of each intermediate feature vector and assign higher retention probabilities to feature vectors with larger standard-deviations. We do not apply the additional adaptive quantization, so that the comparison focuses on feature sparsification under the same sparsification ratio for fair comparisons.
\end{itemize}
Note that unlike \textbf{ICS}, all these baselines except \textbf{RS} incur additional client-side processing before transmission, such as magnitude ranking (\textbf{TS}, \textbf{RTS}), clustering and codebook generation (\textbf{FedLite}), or statistics computation (\textbf{SplitFC}).

\textbf{Comparison of Test Accuracy under i.i.d. Datasets.}
\begin{figure}[t]
    \centering
    \includegraphics[width=0.7\linewidth]{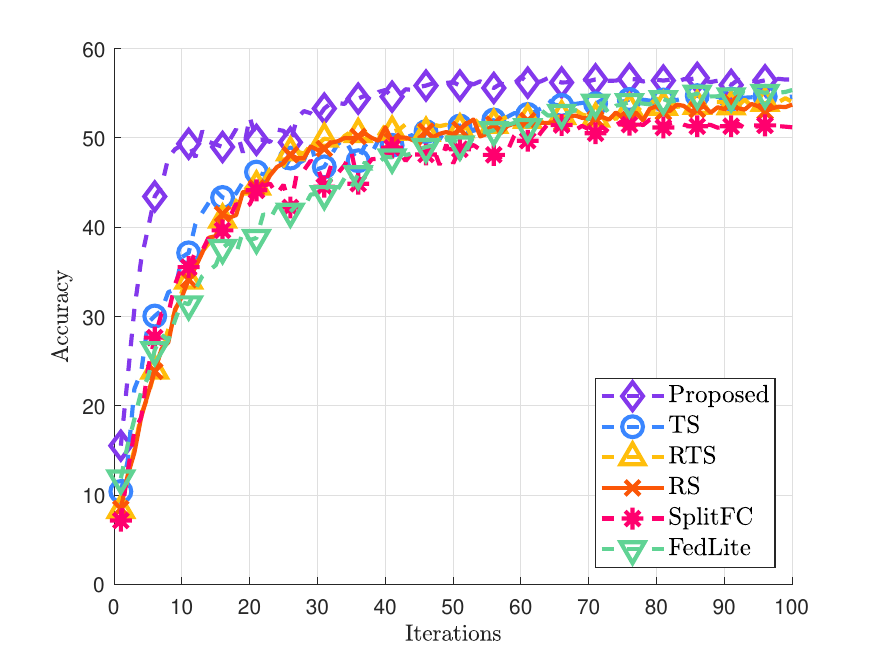}
    \caption{Test accuracy of CIFAR-100 classification under i.i.d. datasets.}
    \label{fig: cifar100_IID}
\end{figure}
We evaluate the performance of the proposed \textbf{ICS} framework on the CIFAR-100 dataset and compare it with baseline methods. As shown in Fig.~\ref{fig: cifar100_IID}, \textbf{ICS} consistently outperforms all baseline methods. This superior performance can be attributed to its capability to effectively capture and retain intermediate features that are intrinsically significant to  SL model’s predictive performance. Interestingly, \textbf{TS} demonstrates only marginal performance gains compared to \textbf{RS} and \textbf{RTS}. This outcome highlights a key distinction between SL and traditional FL. Specifically, it indicates that merely selecting intermediate feature elements based on magnitude alone does not translate effectively into improved performance in SL. 

It is also worth noting that \textbf{ICS} improves test accuracy faster than the baselines even in the early stage of training. This is notable because Grad-CAM is conventionally applied as a post-hoc tool for converged models. In contrast, \textbf{ICS} uses the true-class gradient with respect to the intermediate feature as a training-time importance metric, computed by the server during backpropagation at every iteration. It therefore quantifies the sensitivity of the current true-class score to each feature channel, and is updated as training progresses rather than relying on a fixed importance pattern. The early-stage gain suggests that this metric provides a useful task-sensitive selection criterion even when training starts from random initialization.
\begin{figure}[t]
    \centering
    \includegraphics[width=0.7\linewidth]{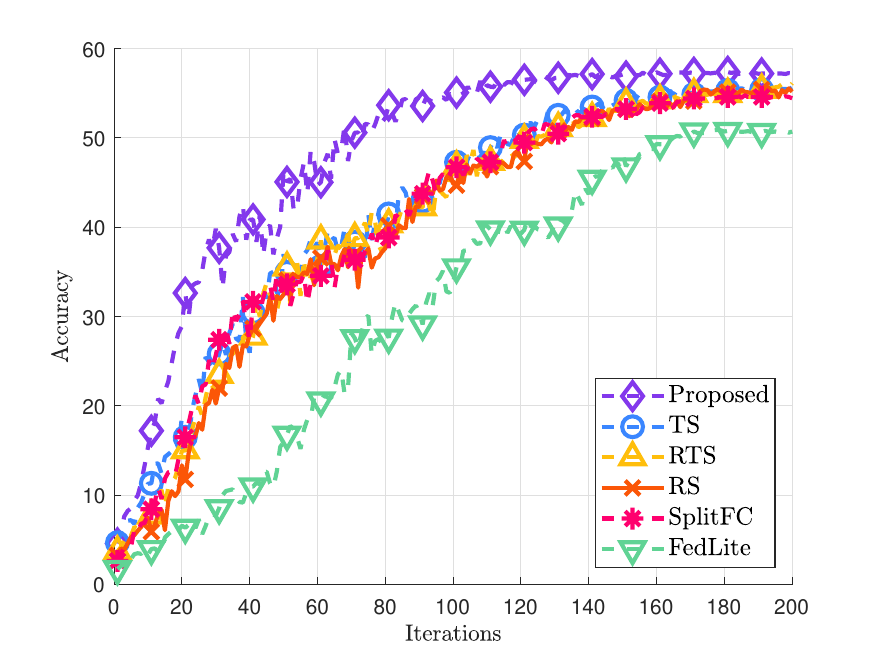}
    \caption{Test accuracy of CIFAR-100 classification with Dirichlet $\alpha=0.5$.}
    \label{fig: cifar100_NONIID}
\end{figure}
\begin{figure}[t]
    \centering
    \includegraphics[width=0.7\linewidth]{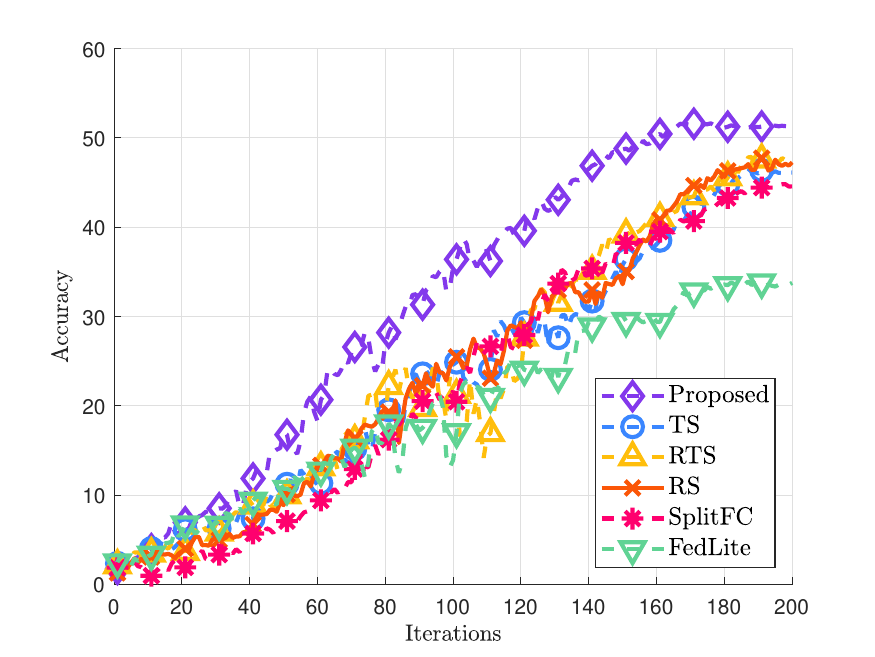}
    \caption{Test accuracy of CIFAR-100 classification with Dirichlet $\alpha=0.1$.}
    \label{fig: cifar100_alpha01}
\end{figure}
\begin{figure}[t]
    \centering
    \includegraphics[width=0.7\linewidth]{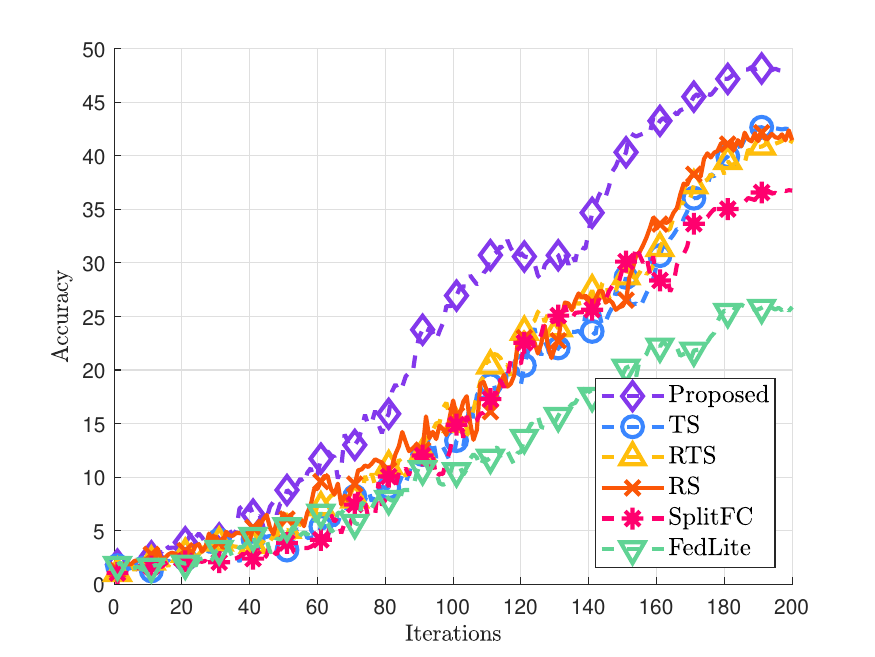}
    \caption{Test accuracy of CIFAR-100 classification with Dirichlet $\alpha=0.05$.}
    \label{fig: cifar100_alpha005}
\end{figure}

\textbf{Comparison of Test Accuracy under Non-i.i.d. Datasets.}
We further evaluate the performance of our proposed \textbf{ICS} framework under non-i.i.d. data distribution settings. In this experiment, datasets are generated using the Dirichlet distribution with concentration parameter $\alpha=0.5$. Fig. \ref{fig: cifar100_NONIID} presents the test accuracy of various sparsification strategies at the same sparsification ratio. Due to the inherently higher learning difficulty under non-i.i.d. conditions, the model requires more training iterations to converge, specifically, 200 iterations are executed. Fig. \ref{fig: cifar100_NONIID} shows that \textbf{ICS} consistently outperforms all baseline methods, demonstrating both superior final accuracy and notably faster convergence. 
Compared to the i.i.d. scenario, our approach achieves more pronounced advantages in terms of convergence speed under non-i.i.d. conditions. This is primarily because \textbf{ICS} selectively transmits intermediate features universally important across all true labels, whereas baseline methods perform sparsification without considering such universal relevance, thus limiting their convergence efficiency in non-i.i.d. environments.

\textbf{Effects of Non-i.i.d. Levels.}
\label{subsec: class_balance_ablation}
To further evaluate the robustness of the proposed \textbf{ICS} under varying degrees of data heterogeneity, we conduct additional experiments by changing the Dirichlet concentration parameter $\alpha$ used for client data partitioning. Specifically, we consider $\alpha \in \{0.05,0.1\}$, which induces more severe non-i.i.d. data partitions than the $\alpha=0.5$ setting considered in Fig.~\ref{fig: cifar100_NONIID}. In Figs.~\ref{fig: cifar100_alpha01} and~\ref{fig: cifar100_alpha005}, we present the corresponding test accuracy of \textbf{ICS} and the baseline methods.

As $\alpha$ decreases, all methods generally exhibit performance degradation due to the increasing discrepancy among client data distributions. However, \textbf{ICS} consistently outperforms the baselines, and the performance gap becomes more pronounced as the data distribution becomes more heterogeneous. In particular, the performance gap is most evident when $\alpha=0.05$, where the client data distributions become highly label-skewed and baseline sparsification methods are more likely to select features biased toward locally dominant classes. The advantage of \textbf{ICS} is closely related to its class-balanced importance aggregation. Unlike the baseline methods, \textbf{ICS} estimates class-specific importance vectors and aggregates them in a class-balanced manner, thereby reducing the influence of locally dominant labels on the sparsification policy. This allows \textbf{ICS} to better preserve feature channels that are discriminative across classes and maintain its effectiveness even under severe non-i.i.d. data distributions.

\begin{figure}[t]
    \centering
    \includegraphics[width=0.7\linewidth]{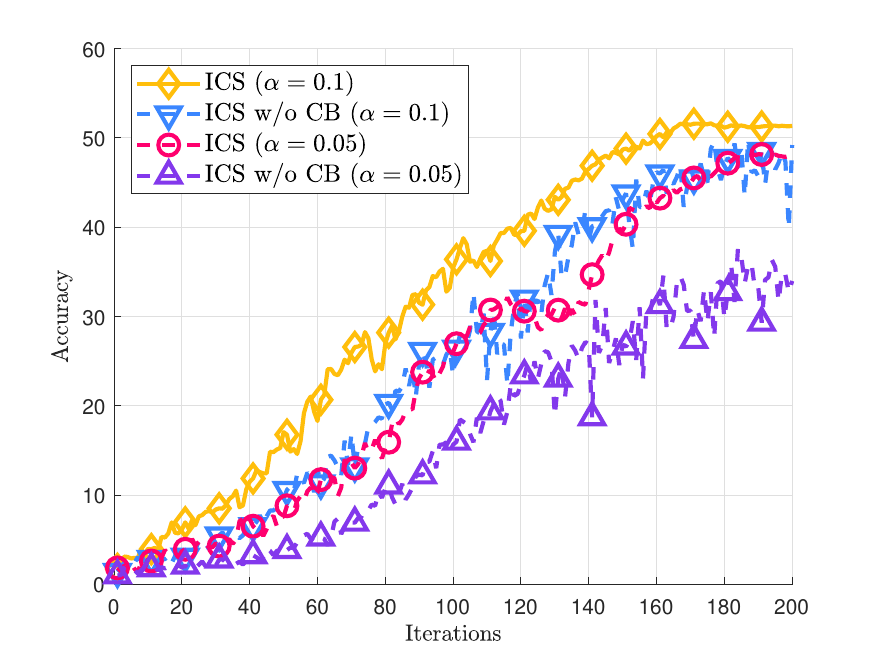}
    \caption{Test accuracy of \textbf{ICS} and \textbf{ICS w/o CB} on CIFAR-100 with Dirichlet $\alpha=0.1$ and $\alpha=0.05$.}
    \label{fig: cifar100_CB}
\end{figure}

\textbf{Effects of Class-Balanced Importance Aggregation.}
\label{subsec: ddd}
We further examine the contribution of the class-balanced aggregation criterion in \textbf{ICS}. For this purpose, we compare \textbf{ICS} with \textbf{ICS w/o CB} \cite{kim2025feature}, where CB denotes class balance. In \textbf{ICS w/o CB}, the importance vector is obtained by directly averaging the sample-wise importance vectors within each mini-batch, while all other procedures are kept identical to those of \textbf{ICS}.

Fig.~\ref{fig: cifar100_CB} shows the test accuracy under the same experimental setting as Fig.~\ref{fig: cifar100_NONIID}. To better reveal the effect of class-balanced aggregation, we consider two severe non-i.i.d. settings with $\alpha=0.1$ and $\alpha=0.05$. The results show that \textbf{ICS} consistently outperforms \textbf{ICS w/o CB} in both cases. In addition, \textbf{ICS} exhibits more stable training behavior, whereas \textbf{ICS w/o CB} shows larger accuracy fluctuations, particularly when $\alpha=0.05$. These results show that class-balanced aggregation mitigates the effect of mini-batch label imbalance when constructing the importance vector, leading to higher accuracy and more stable training under severe non-i.i.d. data heterogeneity.

\textbf{Effects of Mini-batch Size and Sparsification Ratio.}
Fig. \ref{fig: cifar100_BATCH} represents the test accuracy of CIFAR-100 with various mini-batch size and sparsification ratio under a fixed communication budget. In SL, each client transmits intermediate features of shape $B \times C \times U \times V$ to the server, where $B$ denotes the local mini‐batch size and $C,U,V$ are the feature channel, height, and width dimensions of the layer feature tensor, respectively. We fix the total number of transmitted elements per iteration and vary $B\in\{128,32,16\}$ alongside the sparsification ratio $R=\frac{N}{C}\in\{0.1,0.4,0.8\}$, respectively. 

Our results reveal an inherent trade-off between mini-batch size $B$ and sparsification ratio $R$ in communication-constrained SL, as predicted from Theorem \ref{theorem: convergence}. Specifically, increasing the mini-batch size leads to a higher total number of transmitted data samples. However, due to the lower $R$, each individual sample conveys less information, potentially impeding the training process. Conversely, while a smaller $B$ with a higher $R$ transmits fewer samples overall, it may still hinder convergence despite each sample carrying more information. Moreover, we find that higher $B$ setups yield more rapid increases in test accuracy during the first few epochs, indicating that exposure to a larger variety of examples outweighs per-sample fidelity at the outset. As training progresses, the advantage shifts toward higher $R$, which reduce the per-epoch accuracy gap by preserving fuller feature information after broad sample exposure. 

\begin{figure}[t]
    \centering
    \includegraphics[width=0.7\linewidth]{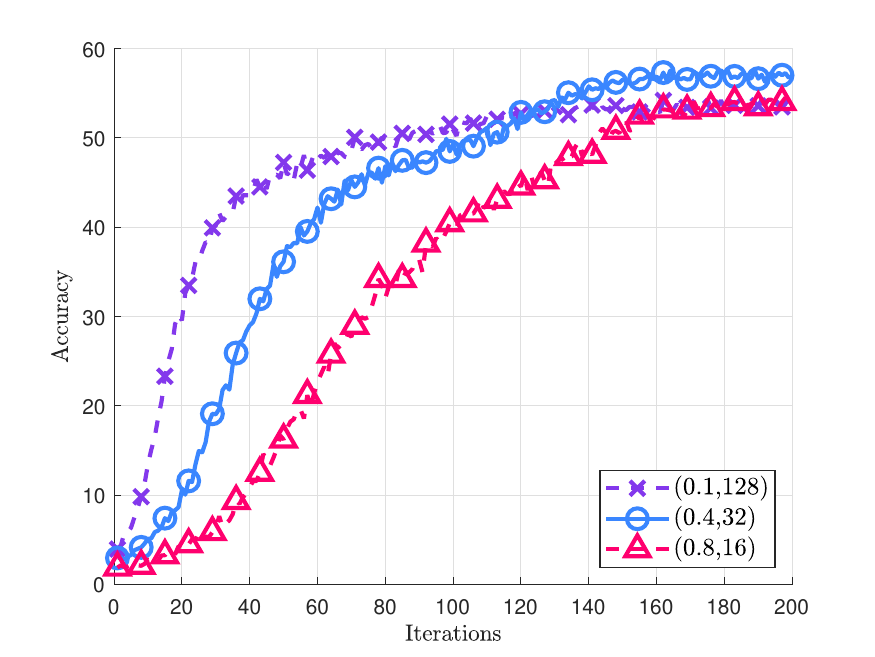}
    \caption{Test accuracy of CIFAR-100 classification under different sparsification ratio and mini-batch size pair.}
    \label{fig: cifar100_BATCH}
\end{figure}
\textbf{Evaluation under PSL.}
\begin{figure}[t]
    \centering
    \includegraphics[width=0.7\linewidth]{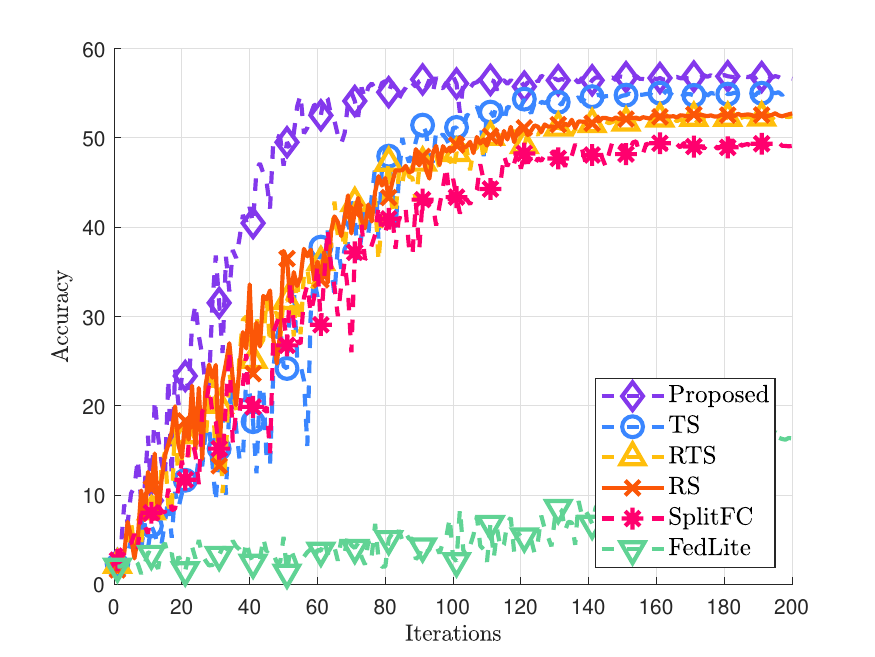}
    \caption{Test accuracy on CIFAR-100 classification under PSL with Dirichlet $\alpha=0.5$.}
    \label{fig: PSL}
\end{figure}
We further evaluate the proposed \textbf{ICS} framework under the PSL setting described in Section~\ref{sec:psl_extension}. Fig.~\ref{fig: PSL} shows the test accuracy under the PSL setting with Dirichlet $\alpha=0.5$ and mini-batches of size $32$. The proposed \textbf{ICS} achieves the best performance among all the compared methods, demonstrating that the shared importance vector remains effective even when multiple clients participate in parallel.

An interesting observation is that \textbf{FedLite} exhibits a larger performance degradation than that in the sequential SL setting. This may be due to the interaction between its data-dependent codebook construction and client-side model aggregation. Under PSL, each client performs local updates in parallel using features generated from its own non-i.i.d. data, and the resulting client-side models are then aggregated into a single model for the next iteration. Since each client compresses its features with an independently estimated codebook, the local updates of different clients are shaped by client-dependent quantization errors. When these models are aggregated, the differing quantization errors may not be canceled and instead mix into a single model, which can yield less coherent updates. This effect recurs at every aggregation step and is more pronounced under the non-i.i.d. partition with $\alpha=0.5$, where the parallel client-side models can diverge more strongly. In contrast, \textbf{ICS} uses a shared server-generated importance vector, allowing all clients to sparsify their intermediate features according to the same task-aware criterion. This shared sparsification rule helps maintain consistency across parallel client updates, making \textbf{ICS} more compatible with PSL.

\textbf{Generalization to Transformer-based SL Models.}
\begin{figure}[t]
    \centering
    \includegraphics[width=0.7\linewidth]{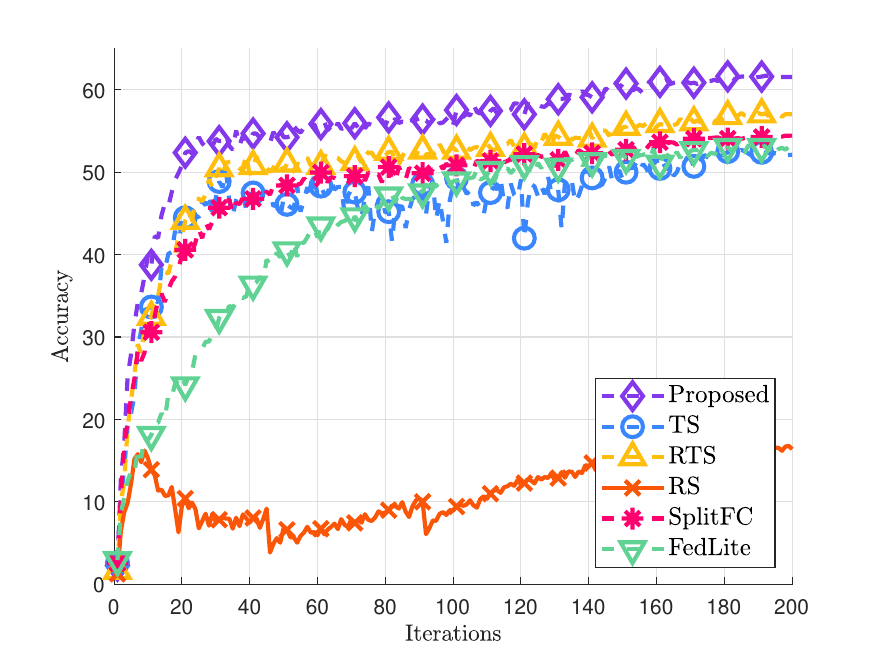}
    \caption{Test accuracy of CIFAR-100 classification with transformer-based SL.}
    \label{fig: Transformer_IID}
\end{figure}
To verify that \textbf{ICS} is not limited to CNN-based architectures, we further evaluate it with a transformer-based SL model. Following Remark~\ref{remark: architecture_agnostic}, we evaluate \textbf{ICS} on CIFAR-100 using a split transformer model, where sparsification is applied along the embedding dimension $d_{EMB}$ of the token embedding tensor. The model has eight transformer encoder layers with embedding dimension $256$, eight attention heads, MLP ratio $4$, and dropout $0.1$. The first two encoder layers are placed on the client side, while the remaining six encoder layers, followed by layer normalization, mean pooling over patch tokens, and the classification head, are placed on the server side. The experiment is conducted with 10 clients under i.i.d. partitioning using AdamW with learning rate $3 \times 10^{-3}$, weight decay $10^{-2}$ and cosine learning-rate decay. As shown in Fig.~\ref{fig: Transformer_IID}, \textbf{ICS} achieves stable training performance, whereas \textbf{RS} fails to train reliably under the same communication budget. This result verifies that \textbf{ICS} can be extended to transformer-based SL by applying feature-wise sparsification along the embedding dimension.

\begin{figure}[t]
    \centering
    \includegraphics[width=0.7\linewidth]{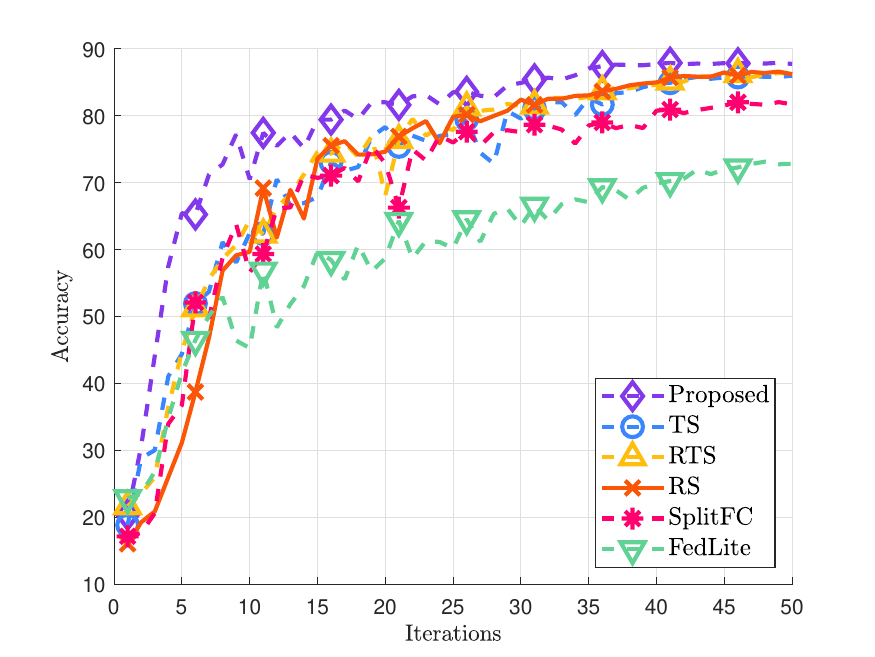}
    \caption{Test accuracy of CIFAR-10 classification under i.i.d. datasets.}
    \label{fig: cifar10_IID}
\end{figure}
\begin{figure}[t]
    \centering
    \includegraphics[width=0.7\linewidth]{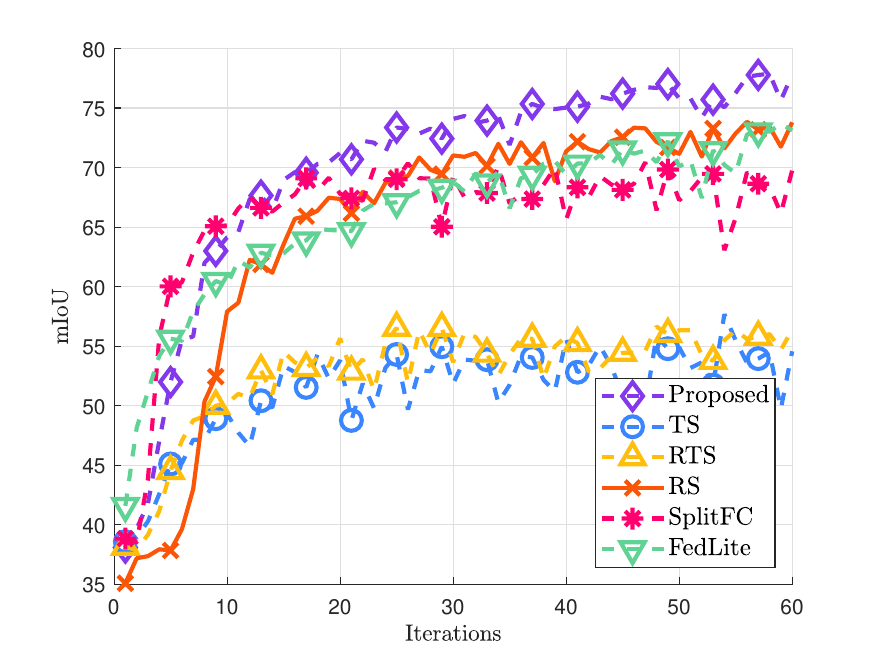}
    \caption{mIoU of segmentation under Oxford-IIIT Pet datasets.}
    \label{fig: seg_MIOU}
\end{figure}
\begin{figure}[t]
    \centering
    \includegraphics[width=0.7\linewidth]{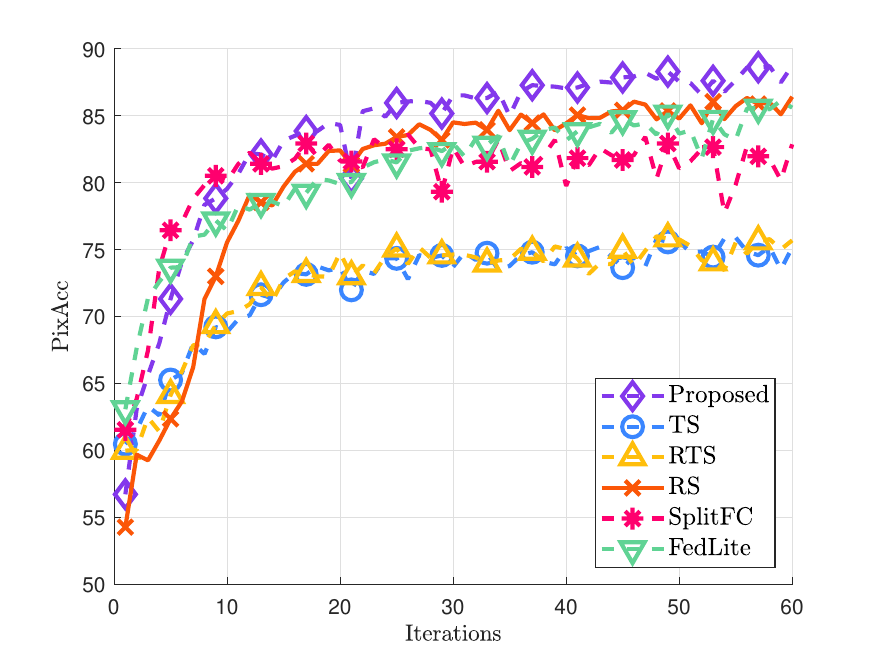}
    \caption{Pixel accuracy of segmentation under Oxford-IIIT Pet datasets.}
    \label{fig: seg_PIXACC}
\end{figure}
\textbf{Generalization to a Different Dataset.} To assess the generalizability of our approach across different datasets, we conduct additional experiments on CIFAR-10. The experimental setup was identical to that used for CIFAR-100, except for a fixed mini-batch size of $256$, a learning rate of $0.1$ and a total of $50$ iterations. As illustrated in Fig. \ref{fig: cifar10_IID}, $\textbf{ICS}$ consistently outperforms alternative sparsification schemes. These results underscore the importance of effectively preserving critical intermediate feature information and demonstrate the broad applicability of our approach across multiple datasets.

\begin{figure*}[t]
    \centering
    \includegraphics[width=0.98\textwidth]{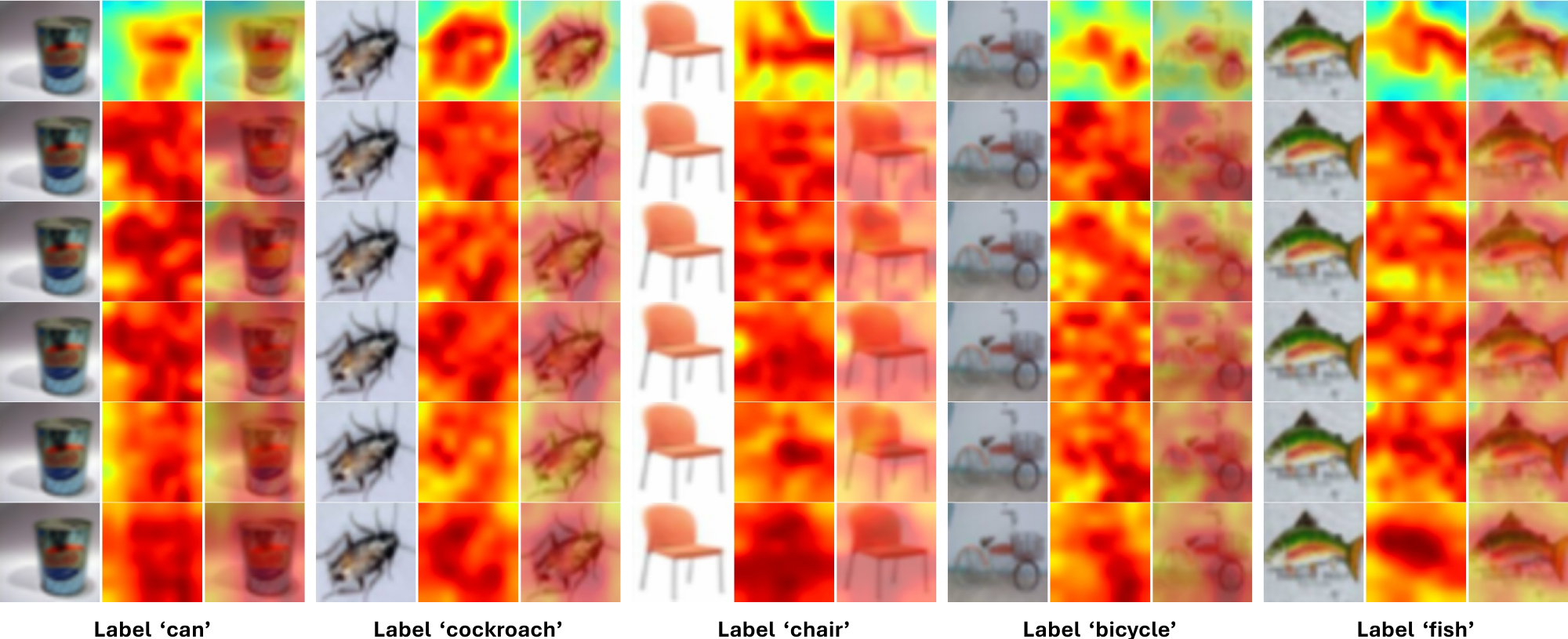}
    \caption{Comparison of feature saliency maps for sampled images from five different labels. For each sample, the first, second, and third columns show the original image, the Grad-CAM saliency map, and their overlay, respectively. Each row corresponds to a different sparsification method: \textbf{ICS} (row 1), \textbf{TS} (row 2), \textbf{RS} (row 3), \textbf{RTS} (row 4), \textbf{SplitFC} (row 5), and \textbf{FedLite} (row 6).}\label{fig: gradcam}
\end{figure*}
\textbf{Generalization to a Different Task.}
We further validate the generalizability of \textbf{ICS} by extending our evaluation to a segmentation task using the Oxford-IIIT Pet dataset \cite{parkhi2012cats}. This task requires the model to produce pixel-level predictions and is thus highly sensitive to the quality of spatial information. Both the client and the server were trained using AdamW, allowing us to examine whether the relative behavior of the methods persists under an adaptive optimizer. Figs. \ref{fig: seg_MIOU} and  \ref{fig: seg_PIXACC} illustrate the validation of mean intersection over union (mIoU) and pixel accuracy, respectively. The results reveal a significant shift in the relative performance of the baseline methods compared to the classification tasks. While \textbf{ICS} consistently achieves the highest mIoU and accuracy, \textbf{TS} and \textbf{RTS} show a clear performance degradation compared with the other methods. This trend differs from the classification results, where the three baselines exhibited comparable performance.

This performance gap can be attributed to the inherent characteristics of semantic segmentation. In such datasets, background regions typically occupy a dominant portion of the image and are easier to recognize than object regions, which are often small and spatially localized. This object and background imbalance can bias magnitude-based feature selection schemes to predominately capture background patterns. Accordingly, \textbf{TS} and \textbf{RTS} are more likely to retain background dominant feature channels while discarding object dominant feature channels, leading to degraded performance. This observation is consistent with prior studies reporting that semantic segmentation datasets exhibit object and background imbalance, where background pixels overshadow object class statistics and suppress gradients for object regions \cite{tappeiner2022tackling}.

In contrast, \textbf{RS} and \textbf{SplitFC} do not rely on magnitude and therefore avoid such background-driven bias. Their stochastic selection introduces a regularization effect, similar in spirit to feature channel-wise dropout. This likely encourages the encoder to learn redundant and diverse representations for the object classes, distributing information across multiple feature channels.  While \textbf{RS} and \textbf{SplitFC} demonstrate strong robustness, their efficacy relies on redundancy learned through stochastic selection. \textbf{ICS}, in contrast, achieves consistently superior accuracy by selecting feature channels based on their server-calculated gradient importance. Crucially, its class-balanced aggregation mechanism directly mitigates the background dominance that is an inherent limitation of \textbf{TS}. By applying equal weighting to the importance scores of the object classes and the background class, \textbf{ICS} prevents the importance vector from collapsing onto background-dominant feature channels. These results suggest that \textbf{ICS} performs well not only on image-level classification tasks but also on pixel-level segmentation tasks.

\textbf{Ablation Study.}
To provide a deeper qualitative understanding of why our proposed \textbf{ICS} achieves superior performance, we visualize the feature-level importance retained by each sparsification method. For this analysis, we adopt the same experimental setup used for CIFAR-100 i.i.d. scenarios and employ Grad-CAM \cite{selvaraju2017grad} to generate saliency maps derived from the transmitted feature channels.

Fig. \ref{fig: gradcam} presents these visualizations for representative test samples from the CIFAR-100 dataset, including ground-truth labels. Within each subfigure, the columns display the original image, the resulting saliency heatmap, and an overlay of the two, respectively. The rows correspond to the different sparsification methods being compared, specifically, the rows represent \textbf{ICS}, \textbf{TS}, \textbf{RS}, \textbf{RTS}, \textbf{SplitFC}, and \textbf{FedLite}, respectively. As shown across all subfigures, heatmaps of $\textbf{ICS}$ are localized and concentrate almost exclusively on the meaningful regions of the objects. This indicates that our importance-aware mechanism effectively identifies and prioritizes the most discriminative feature channels for the task.

In contrast, the saliency map for \textbf{RS} exhibits no discernible spatial structure, as expected given its random and feature-agnostic selection strategy. Similarly, the maps for \textbf{TS} and \textbf{RTS} appear consistently diffuse, failing to localize the target objects and instead spreading attention across irrelevant background regions. The maps for \textbf{SplitFC} and \textbf{FedLite} also show limited object localization. This is because \textbf{SplitFC} selects features based on feature-level statistics rather than their direct contribution to the task objective, while \textbf{FedLite} compresses intermediate features through clustering-based codebook construction without explicitly preserving task-discriminative feature channels. These results visually reinforce our core hypothesis that feature magnitude, feature statistics, and clustering-based representation compression is an unreliable indicator of feature importance. The qualitative evidence thus complements our quantitative findings, explaining why \textbf{ICS} achieves superior accuracy and convergence by transmitting only the most salient and informative features. 

\section{Conclusion}
This paper introduced ICS, an importance-aware class-balanced sparsification method for wireless split learning. The server estimates Grad-CAM-based channel importance from the true-class logit during backpropagation and aggregates it into a class-balanced, label-agnostic vector, which each client reuses in the next round to transmit only the most discriminative feature channels without any additional client-side computation. The class-balanced design mitigates the bias that arises under non-i.i.d. client data, and the convergence analysis reveals a trade-off between the mini-batch size and the sparsification ratio under a fixed communication budget. We further showed that ICS extends to parallel split learning through a shared server-generated importance vector and to transformer-based models along the embedding dimension. Experiments across classification and semantic segmentation, several heterogeneity levels, and different architectures confirmed that ICS consistently outperforms existing communication-efficient SL baselines while keeping the client-side overhead minimal.

\appendices
\section{Proof of Theorem \ref{theorem: convergence}}\label{apdx of theorem: convergence}
To begin with, we first recall the global model, i.e. $\theta^t=[\theta_{d}^t||\theta_{s}^t]$. The model update can be represented as
\begin{align}
    \theta^{t+1}=\theta^t-\sum_{k=0}^{K-1}\sum_{e=0}^{E-1} \eta^t\nabla F_{k}(\tilde{z}_k^{t,e},\mathcal{B}^{t,e}_k;\theta_{k}^{t,e}),
\end{align}
where $\eta^t$ is learning rate at iteration $t$, $E$ is the number of local steps, $\nabla F_{k}(\tilde{z}_k^{t,e},\mathcal{B}^{t,e}_k;\theta_{k}^{t,e})$ represents the concatenated stochastic gradients with sparsification of client $k$ and the server,  i.e. $\nabla F_{k}(\tilde{z}_k^{t,e},\mathcal{B}^{t,e}_k;\theta_{k}^{t,e})=[\nabla_{\theta_{d,k}^{t,e}} F_{k}(\tilde{z}_k^{t,e},\mathcal{B}^{t,e}_k;\theta_{k}^{t,e})||\nabla_{\theta_{s,k}^{t,e}} F_{k}(\tilde{z}_k^{t,e},\mathcal{B}^{t,e}_k;\theta_{k}^{t,e})]$.

Next, using $S$-smoothness, the following inequality holds:
\begin{align}\label{eq: 1-round convergence}
    \mathbb{E}[F(\theta^{t+1})-F(\theta^t)]\leq\underbrace{\mathbb{E}[\nabla F(\theta^t)^\textsf{T}(\theta^{t+1}-\theta^t)]}_{(a)}\nonumber\\+\frac{S}{2}\underbrace{\mathbb{E}[\|\theta^{t+1}-\theta^t\|_2^2]}_{(b)}. 
\end{align}
Now we can derive an upper bound of $(a)$ in \eqref{eq: 1-round convergence} as follows:
\begin{align}
    (a)&=-\eta^t\sum_{k=0}^{K-1}\sum_{e=0}^{E-1}\mathbb{E}\left[\nabla F(\theta^t)^\textsf{T}\nabla F_k(\theta_{k}^{t,e})\right]\\
    &= -\frac{\eta^tKE}{2}\mathbb{E}\left[\left\|\nabla F(\theta^t)\right\|_2^2\right]\nonumber\\
    &\quad-\frac{\eta^t}{2}\sum_{k=0}^{K-1}\sum_{e=0}^{E-1}\mathbb{E}\left[\left\|\nabla F_k(\theta_{k}^{t,e})\right\|_2^2\right]\nonumber\\
    &\quad+\frac{\eta^t}{2}\underbrace{\sum_{k=0}^{K-1}\sum_{e=0}^{E-1}\mathbb{E}\left[\left\|\nabla F(\theta^t)-\nabla F_k(\theta_{k}^{t,e})\right\|_2^2\right]}_{(a.1)},\label{eq: split0}
\end{align}
where the first equality is due to \textbf{Assumption \ref{as:local gradient unbiasedness}} and the second equality comes from a simple property, such as $-\mathbf{a}^{T}\mathbf{b} = \frac{1}{2}(-\|\mathbf{a}\|_2^2 - \|\mathbf{b}\|_2^2 + \|\mathbf{a - b}\|_2^2)$. For $(a.1)$ in \eqref{eq: split0}, by adding, subtracting $\nabla F_k(\theta^t)$ and using  $\|\mathbf{a}+\mathbf{b}\|_2^2\leq2\|\mathbf{a}\|_2^2+2\|\mathbf{b}\|_2^2$, we have
\begin{align}
    (a.1)&\leq\sum_{k=0}^{K-1}\sum_{e=0}^{E-1}\mathbb{E}\Bigg[2\left\|\nabla F(\theta^t)-\nabla F_k(\theta^{t})\right\|_2^2\nonumber\\
    &\qquad\qquad\qquad+2\left\|\nabla F_k(\theta^t)-\nabla F_k(\theta_{k}^{t,e})\right\|_2^2\Bigg]\label{eq: split1}\\
    &\leq 2KE\psi^2+2\underbrace{\sum_{k=0}^{K-1}\sum_{e=0}^{E-1}\mathbb{E}\left[\left\|\nabla F_k(\theta^t)-\nabla F_k(\theta_{k}^{t,e})\right\|_2^2\right]}_{(a.1.1)}, \label{eq: a.1.1}
\end{align}
where the second inequality comes from $\textbf{Assumption \ref{as: client heterogeneity}}$.

For $(a.1.1)$ in \eqref{eq: a.1.1}, using $S$-smoothness, we have
\begin{align}
    &(a.1.1)\leq S^2\sum_{k=0}^{K-1}\sum_{e=0}^{E-1}\mathbb{E}\left[\left\|\theta^{t,0}_{0}-\theta_{k}^{t,e}\right\|_2^2\right]\\
    &=S^2\sum_{k=0}^{K-1}\sum_{e=0}^{E-1}\mathbb{E}\Bigg[\Big\|\sum_{i=0}^{k-1}\sum_{j=0}^{E-1}\eta^t\nabla F_i(\mathcal{B}_i^{t,j};\theta_{i}^{t,j})\nonumber\\
    &\qquad\qquad\qquad\qquad+\sum_{h=0}^{e-1}\eta^t\nabla F_k(\mathcal{B}_k^{t,h};\theta_{k}^{t,h})\Big\|_2^2\Bigg]\label{eq: split_2}\\
    &\leq S^2\sum_{k=0}^{K-1}\sum_{e=0}^{E-1}(kE+e)\Bigg(\sum_{i=0}^{k-1}\sum_{j=0}^{E-1}\mathbb{E}\left[\|\eta^t\nabla F_i(\mathcal{B}_i^{t,j};\theta_{i}^{t,j})\|_2^2\right]\nonumber\\
    &\qquad\qquad\qquad\qquad+\sum_{h=0}^{e-1}\mathbb{E}\left[\|\eta^t\nabla F_k(\mathcal{B}_k^{t,h};\theta_{k}^{t,h})\|_2^2\right]\Bigg)\label{eq: split_3}\\
    &=S^2\sum_{k=0}^{K-1}\sum_{e=0}^{E-1}(kE+e)(\eta^t)^2\Bigg(\sum_{i=0}^{k-1}\sum_{j=0}^{E-1}\mathbb{E}\Big[\|\nabla F_i(\theta_{i}^{t,j})\|_2^2\nonumber\\
    &+\|\nabla F_i(\mathcal{B}_i^{t,j};\theta_{i}^{t,j})-\nabla F_i(\theta_{i}^{t,j})\|_2^2\Big]+\sum_{h=0}^{e-1}\mathbb{E}\Big[\|\nabla F_k(\theta_{k}^{t,h})\|_2^2\nonumber\\
    &+\|\nabla F_k(\mathcal{B}_k^{t,h};\theta_{k}^{t,h})-\nabla F_k(\theta_{k}^{t,h})\|_2^2\Big]\Bigg)\label{eq: split_4}\\
    &\leq S^2(\eta^t)^2\sum_{k=0}^{K-1}\sum_{e=0}^{E-1}(kE+e)^2\left(\frac{\sigma^2}{B}+G^2\right)\\
    &= S^2(\eta^t)^2\left(\frac{\sigma^2}{B}+G^2\right)\frac{KE(KE-1)(2KE-1)}{6}\\
    &\leq \frac{1}{3}S^2(\eta^t)^2\left(\frac{\sigma^2}{B}+G^2\right)(KE)^3,
\end{align}
where the first equality is obtained by recursively applying the sequential SL update rule up to the beginning of the $e$-th local step of client $k$. Specifically, before the $e$-th local update of client $k$, the model has already been updated by all $E$ local steps of clients $0,\ldots,k-1$ and by the first $e$ local steps of client $k$, starting from $\theta_0^{t,0}$, which gives
\begin{align}\label{eq: theta_expand}
    \theta_{k}^{t,e}=\theta_{0}^{t,0}-\eta^t\Bigg(\sum_{i=0}^{k-1}&\sum_{j=0}^{E-1}\nabla F_i(\mathcal{B}_i^{t,j};\theta_{i}^{t,j})\nonumber\\
    &+\sum_{h=0}^{e-1}\nabla F_k(\mathcal{B}_k^{t,h};\theta_{k}^{t,h})\Bigg).
\end{align}
Substituting this expansion into $\theta_0^{t,0}-\theta_k^{t,e}$ yields the first equality. The second inequality is due to Cauchy–Schwarz inequality, and the second equality is obtained by adding and subtracting $\nabla F_i(\theta_i^{t,j})$ and $\nabla F_k(\theta_k^{t,h})$. The cross terms in the equality go to zero due to the unbiasedness. The third inequality comes from \textbf{Assumptions \ref{as:local gradient unbiasedness}} and \textbf{\ref{as: norm boundedness}}. The last inequality is due to $KE-1\leq KE$ and $\;2KE-1\leq2KE$.

Before upper-bound $(b)$ in \eqref{eq: 1-round convergence}, we first introduce the following \textbf{Lemma \ref{lemma: sparsification error}} for upper-bound $(b)$.
\begin{lemma}\label{lemma: sparsification error}
    Under \textbf{Assumptions \ref{as: sparsification}}, \textbf{\ref{as: norm boundedness}}, and let
    \begin{align}
        m_k^{t,e}=\nabla F_{k}(\tilde{z}_k^{t,e},\mathcal{B}^{t,e}_k;\theta_{k}^{t,e})-\nabla F_{k}(z_k^{t,e},\mathcal{B}^{t,e}_k;\theta_{k}^{t,e})
    \end{align}
    where $m_k^{t,e}=[m_{d,k}^{t,e}||m_{s,k}^{t,e}]$, be the update error caused by the sparsification operator $\mathcal{S}(\cdot)$. Then, $m_k^{t,e}$ is upper-bounded as
    \begin{align}
        \mathbb{E}\|m_k^{t,e}\|_2^2\leq\Lambda^2\left(1-\frac{N}{C}\right)B\delta^2,
    \end{align}
    where $\Lambda^2=\Lambda_0^2(\Lambda_1^2+\Lambda_2^2\Lambda_3^2)$. Here, $\Lambda_0$, $\Lambda_1$, $\Lambda_2$ and $\Lambda_3$ are spectral norm upper-bound of $\frac{\partial\mathcal{L}}{\partial \hat{y}_{k,b}^{t,e}}$, $\frac{\partial^2\hat{y}^{t,e}_{k,b,\ell}}{\partial z_k^{t,e}\partial\theta_{s,k}^{t,e}}$, $\frac{\partial^2 \hat{y}^{t,e}_{k,b,\ell}}{\partial \left(z_k^{t,e}\right)^2}$ and $\frac{\partial z_k^{t,e}}{\partial \theta_{d,k}^{t,e}}$ $\forall k,t,\ell$, respectively.
\end{lemma}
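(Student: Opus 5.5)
The plan is to split $m_k^{t,e}$ into its server-side and client-side blocks, $\|m_k^{t,e}\|_2^2=\|m_{s,k}^{t,e}\|_2^2+\|m_{d,k}^{t,e}\|_2^2$, and to bound each block by a Lipschitz-type estimate in the intermediate feature. Each block will be controlled by the perturbation $\tilde z_{k,b}^{t,e}-z_{k,b}^{t,e}=\mathcal{S}(z_{k,b}^{t,e})-z_{k,b}^{t,e}$, which \textbf{Assumption \ref{as: sparsification}} bounds in mean square by $(1-\frac{N}{C})\|z_{k,b}^{t,e}\|_2^2$. Summing over the $B$ samples and invoking \textbf{Assumption \ref{as: norm boundedness}} ($\mathbb{E}\|z_{k,b}^{t,e}\|_2^2\le\delta^2$) then produces the factor $(1-\frac{N}{C})B\delta^2$.

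For the server block, I will write the per-sample gradient through the chain rule as $\nabla_{\theta_s}\mathcal{L}=\sum_{\ell}\frac{\partial\mathcal{L}}{\partial\hat y_{k,b,\ell}}\frac{\partial\hat y_{k,b,\ell}}{\partial\theta_s}$. The difference between this quantity evaluated at $\tilde z$ and at $z$ will be treated with the outer factor $\frac{\partial\mathcal{L}}{\partial\hat y}$ bounded in spectral norm by $\Lambda_0$. The inner factor $\frac{\partial\hat y}{\partial\theta_s}$ changes by at most $\Lambda_1\|\tilde z_{k,b}-z_{k,b}\|_2$, by the mean value theorem applied with the mixed second derivative $\frac{\partial^2\hat y}{\partial z\,\partial\theta_s}$. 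This gives $\|m_{s,k}\|_2^2\lesssim\Lambda_0^2\Lambda_1^2\sum_b\|\tilde z_{k,b}-z_{k,b}\|_2^2$.

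The client block equals $(\frac{\partial z}{\partial\theta_d})^{\textsf T}\bigl(\nabla_{\tilde z}F_k-\nabla_z F_k\bigr)$ by the client update formula in Section~\ref{sec: system model}. Its Jacobian factor is bounded by $\Lambda_3$. The activation gradient $\sum_\ell\frac{\partial\mathcal{L}}{\partial\hat y_\ell}\frac{\partial\hat y_\ell}{\partial z}$ changes by at most $\Lambda_0\Lambda_2\|\tilde z-z\|_2$, again by the mean value theorem, now using the Hessian $\frac{\partial^2\hat y}{\partial z^2}$. This gives $\|m_{d,k}\|_2^2\lesssim\Lambda_0^2\Lambda_2^2\Lambda_3^2\sum_b\|\tilde z_{k,b}-z_{k,b}\|_2^2$. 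Adding the two blocks and taking expectations with \textbf{Assumptions \ref{as: sparsification}} and \textbf{\ref{as: norm boundedness}} yields $\Lambda_0^2(\Lambda_1^2+\Lambda_2^2\Lambda_3^2)(1-\frac{N}{C})B\delta^2$. Note that the $\frac1B$ averaging in the loss, combined with a per-sample triangle or Cauchy–Schwarz step, would actually give a better factor. I will keep the cruder bound, passing the sum through the norm sample by sample without exploiting the $\frac1B$, so that the exponent of $B$ matches the statement.

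The main obstacle is the variation of $\frac{\partial\mathcal{L}}{\partial\hat y}$ itself between $z$ and $\tilde z$. A product-rule split would produce an extra term involving the Lipschitz constant of the softmax-minus-label map, and this term does not appear in $\Lambda^2$. I will first try to absorb it by treating $\Lambda_0$ as a uniform bound that also covers this variation; the cross-entropy gradient is bounded, since softmax minus a one-hot vector has norm at most $\sqrt2$. Failing that, I will freeze $\frac{\partial\mathcal{L}}{\partial\hat y}$ at a common point, in the spirit of how the paper defines the spectral-norm constants, and apply the mean value argument only to the $\hat y$-derivatives.
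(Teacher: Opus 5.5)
Your proposal matches the paper's proof essentially step for step. It splits $m_k^{t,e}$ into server and client blocks, applies the mean value theorem with the mixed derivative $\frac{\partial^2\hat y}{\partial z\,\partial\theta_s}$ (giving $\Lambda_0^2\Lambda_1^2$) and with the Hessian $\frac{\partial^2\hat y}{\partial z^2}$ composed with $\frac{\partial z}{\partial\theta_d}$ (giving $\Lambda_0^2\Lambda_2^2\Lambda_3^2$), and gets the deliberately crude factor $(1-\frac{N}{C})B\delta^2$ from the whole-batch perturbation via Assumptions~\ref{as: sparsification} and~\ref{as: norm boundedness}. The paper resolves your ``main obstacle'' by your fallback, taking $\frac{\partial\mathcal{L}}{\partial\hat y_b^t}$ outside the difference as a common factor from the start. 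Your first option would not give the stated constant, because a norm bound on $\frac{\partial\mathcal{L}}{\partial\hat y}$ does not make its variation proportional to $\|\tilde z-z\|_2$.
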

\begin{IEEEproof}
    See Appendix \ref{apdx of lemma: sparsification error}.
\end{IEEEproof}
Next, using Cauchy–Schwarz inequality, $(b)$ in \eqref{eq: 1-round convergence} can be expressed as
\begin{align}
    (b)&\leq KE(\eta^t)^2\sum_{k=0}^{K-1}\sum_{e=0}^{E-1}\mathbb{E}\left[\|\nabla F_{k}(\mathcal{B}^{t,e}_k;\theta_{k}^{t,e})+m_{k}^{t,e}\|_2^2\right]\nonumber\\
    &\leq 2KE(\eta^t)^2\underbrace{\sum_{k=0}^{K-1}\sum_{e=0}^{E-1}\mathbb{E}\left[\|\nabla F_{k}(\mathcal{B}^{t,e}_k;\theta_{k}^{t,e})\|_2^2\right]}_{(b.1)}\nonumber\\
    &\;\;\;\;+2KE(\eta^t)^2\underbrace{\sum_{k=0}^{K-1}\sum_{e=0}^{E-1}\mathbb{E}\left[\|m_{k}^{t,e}\|_2^2\right]}_{(b.2)}, \label{eq: split5}
\end{align}
where $z_k^{t,e}$ in $\nabla F_{k}(z_k^{t,e},\mathcal{B}^{t,e}_k;\theta_{k}^{t,e})$ is omitted for ease of notation. The first inequality follows from
$\|\sum_{r=1}^{KE}\mathbf a_r\|_2^2
\leq KE\sum_{r=1}^{KE}\|\mathbf a_r\|_2^2$ and the second inequality follows from $\|\mathbf a+\mathbf b\|_2^2
\leq2\|\mathbf a\|_2^2+2\|\mathbf b\|_2^2$.

By adding and subtracting $\nabla F_k(\theta_k^{t,e})$ in $(b.1)$, we can get
\begin{align}
    (b.1)&=\sum_{k=0}^{K-1}\sum_{e=0}^{E-1}\mathbb{E}\Big[\|\nabla F_{k}(\mathcal{B}^{t,e}_k;\theta_{k}^{t,e})-\nabla F_k(\theta_k^{t,e})\|_2^2\nonumber\\
    &\qquad\qquad\qquad+\|\nabla F_k(\theta_k^{t,e})\|_2^2\Big]\\
    &\leq \frac{KE\sigma^2}{B}+KEG^2,\label{eq: b.1}
\end{align}
where the cross term in the equality goes to zero due to the unbiasedness, i.e., $\mathbb{E}\left[
\nabla F_k(\mathcal B_k^{t,e};\theta_k^{t,e})\right]=\mathbb{E}\left[\nabla F_k(\theta_k^{t,e})
\right]$, and the inequality comes from \textbf{Assumptions \ref{as:local gradient unbiasedness}} and \textbf{\ref{as: norm boundedness}}. Finally, by using $\textbf{Lemma \ref{lemma: sparsification error}}$ for upper-bound $(b.2)$ in \eqref{eq: split5} and substitute $\eqref{eq: b.1}$ into $\eqref{eq: split5}$, $(b)$ in \eqref{eq: 1-round convergence} can be upper-bounded as
\begin{align}
    (b)\leq 2K^2E^2(\eta^t)^2\left(\frac{\sigma^2}{B}+G^2+\Lambda^2\left(1-\frac{N}{C}\right)B\delta^2\right).
\end{align}
Substituting the above inequalities to \eqref{eq: 1-round convergence}, we get
\begin{align}
    \mathbb{E}&[F(\theta^{t+1})-F(\theta^t)] \leq -\frac{\eta^tKE}{2}\mathbb{E}\left[\left\|\nabla F(\theta^t)\right\|_2^2\right]+\eta^tKE\psi^2\nonumber\\
    &+\frac{1}{3}S^2(\eta^t)^3\left(\frac{\sigma^2}{B}+G^2\right)(KE)^3+SK^2E^2(\eta^t)^2\Bigg(\frac{\sigma^2}{B}\nonumber\\
    &+G^2+\Lambda^2\left(1-\frac{N}{C}\right)B\delta^2\Bigg).
\end{align}
Averaging the above inequality over iteration from $0$ to $T - 1$ and applying $\eta^t=\frac{1}{KE\sqrt{T}}$, we can get
\begin{align}
    &\mathbb{E}\left[\frac{1}{T}\sum_{t=0}^{T-1}\|\nabla F(\theta^t)\|_2^2\right]\leq \frac{2}{\sqrt{T}}\mathbb{E}[F(\theta^{0})-F(\theta^{*})]+ 2\psi^2\nonumber\\
    &+ \frac{2S^2}{3T}\left(\frac{\sigma^2}{B}+G^2\right)
    +\frac{2S}{\sqrt{T}}\Bigg(\frac{\sigma^2}{B}+G^2+\Lambda^2\left(1-R\right)B\delta^2\Bigg), \label{eq: split6}
\end{align}
where the first term in the right-hand side is upper-bounded as $\mathbb{E}[F(\theta^{0})-F(\theta^{T})] \leq \mathbb{E}[F(\theta^{0})-F(\theta^{*})]$ since $F(\theta^{T}) \geq F(\theta^{*})$. 

\section{Proof of Lemma \ref{lemma: sparsification error}}\label{apdx of lemma: sparsification error}

We first begin with the server-side error. For ease of notation, we omit indices of $k$ and $e$. Similar to \cite{wang2022fedlite}, using the definition of the server-side gradient, the update error caused by sparsification can be expressed as 
\begin{align}
    \|m_s^t\|_2^2=\left\|\frac{1}{B}\sum_{b=0}^{B-1}\frac{\partial\mathcal{L}}{\partial \hat{y}_{b}^t}\left(\frac{\partial \hat{y}^{t}_{b,\ell}(\tilde{z}_k^t;\theta_s^t)}{\partial\theta_s^t}-\frac{\partial\hat{y}^{t}_{b,\ell}(z_k^t;\theta_s^t)}{\partial\theta_s^t}\right)\right\|_2^2\label{eq: sparsification}
    \end{align}
where $\tilde{z}^t$ and $z^t$ denote intermediate feature after sparsification and before sparsification, respectively. Next using the mean value theorem,
\begin{align}
    \|m_s^t\|_2^2&\leq \max_b\left\|\frac{\partial\mathcal{L}}{\partial \hat{y}_{b}^t}\frac{\partial^2\hat{y}^{t}_{b,\ell}(z_k^t;\theta_s^t)}{\partial z\partial\theta_s^t}(\tilde{z}^t-z^t)\right\|_2^2\\
    &\leq \Lambda_0^2\Lambda_1^2\left(1-\frac{N}{C}\right)B\delta^2
\end{align}
where the second inequality is due to \textbf{Assumptions \ref{as: sparsification}} and \textbf{\ref{as: norm boundedness}}. Here, $\Lambda_0$ and $\Lambda_1$ are spectral norm upper-bound of $\frac{\partial\mathcal{L}}{\partial \hat{y}_{b}^t}$ and $\frac{\partial^2\hat{y}^{t}_{b,\ell}(z^t;\theta_s^t)}{\partial z^t\partial\theta_s^t}$, respectively.

Similar to the server-side error, the update error caused by sparsification at the client-side can be expressed as 
\begin{align}
    \|m_d^t\|_2^2&=\Bigg\|\frac{1}{B}\sum_{b=0}^{B-1}\frac{\partial\mathcal{L}}{\partial \hat{y}_{b}^t}\Bigg(\frac{\partial \hat{y}^{t}_{b,\ell}(\tilde{z}^t;\theta_s^t)}{\partial\tilde{z}^t}\nonumber\\
    &\qquad\qquad\qquad\qquad\quad-\frac{\partial\hat{y}^{t}_{b,\ell}(z^t;\theta_s^t)}{\partial z^t}\Bigg)\frac{\partial z^t}{\partial\theta_d^t}\Bigg\|_2^2\label{eq: split7}\\
    &\leq  \max_b \left\|\frac{\partial\mathcal{L}}{\partial \hat{y}_{b}^t}\frac{\partial^2 \hat{y}^{t}_{b,\ell}}{\partial (z^t)^2}\frac{\partial z^t}{\partial \theta_d^t}(z^t-\tilde{z}^t)\right\|_2^2\\
    &\leq \Lambda_0^2\Lambda_2^2\Lambda_3^2\left(1-\frac{N}{C}\right)B\delta^2,
\end{align}
where $\hat{y}^{t}_{b,\ell}(\tilde{z}^t;\theta_s^t)$ and $\hat{y}^{t}_{b,\ell}(z^t;\theta_s^t)$ denote the logit forwarded with the sparsified intermediate feature $\tilde{z}^t$ and non-sparsified intermediate feature $z^t$, respectively. The first inequality comes from the mean-value theorem and the second inequality is due to \textbf{Assumptions \ref{as: sparsification}} and \textbf{\ref{as: norm boundedness}}. Here, $\Lambda_0$, $\Lambda_2$ and $\Lambda_3$ are spectral norm upper-bound of $\frac{\partial\mathcal{L}}{\partial \hat{y}_{b}^t}$, $\frac{\partial^2 \hat{y}^{t}_{b,\ell}}{\partial (z^t)^2}$ and  $\frac{\partial z^t}{\partial \theta_d^t}$, respectively.

Finally, combining the above inequalities, the error becomes
\begin{align}
    \mathbb{E}\|m^t\|_2^2\leq\Lambda^2\left(1-\frac{N}{C}\right)B\delta^2,
\end{align}
where $\Lambda^2=\Lambda_0^2(\Lambda_1^2+\Lambda_2^2\Lambda_3^2).$

\bibliographystyle{IEEEtran}  
\bibliography{IEEEabrv,reference}
\begin{IEEEbiography}
	[{\includegraphics[width=1in,height=1.25in,clip,keepaspectratio]{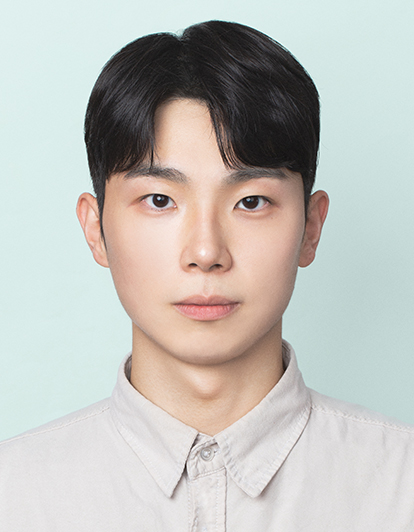}}]{Bumjun Kim}(Graduate Student Member, IEEE) received the B.S. degree from the Department of Electronic Engineering, Jeonbuk National University, Jeonju, South Korea, in 2020. He is currently pursuing the Ph.D. degree with the Department of Electrical and Computer Engineering, Seoul National University, Seoul. His research interests include wireless communications, distributed learning, and semantic communication.
\end{IEEEbiography}

\begin{IEEEbiography}
	[{\includegraphics[width=1in,height=1.25in,clip,keepaspectratio]{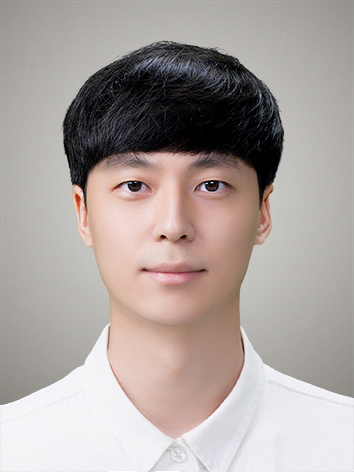}}]{Yoon Huh}(Graduate Student Member, IEEE) received the B.S. degree (summa cum laude) from the School of Electrical and Electronics Engineering, Yonsei University, Seoul, South Korea, in 2022. He is currently pursuing the Ph.D. degree with the Department of Electrical and Computer Engineering, Seoul National University, Seoul, South Korea. His research interests include wireless communication, semantic communication, and deep learning.
\end{IEEEbiography}
\begin{IEEEbiography}
	[{\includegraphics[width=1in,height=1.25in,clip,keepaspectratio]{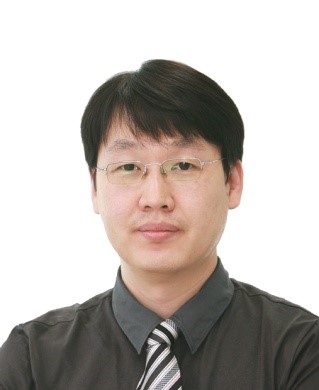}}]{Wan Choi}(Fellow, IEEE) is Professor of Department of Electrical and Computer Engineering, Seoul National University, Seoul, Korea. He is Director of Institute of New Media and Communications (INMC), Seoul National University, from Feb. 2025. From Feb. 2007 to Feb. 2020, he was Professor of School of Electrical Engineering, Korea Advanced Institute of Science and Technology (KAIST), Daejeon, Korea. He received the B.Sc. and M.Sc. degrees from the School of Electrical Engineering and Computer Science (EECS), Seoul National University (SNU), Seoul, Korea, in 1996 and 1998, respectively, and the Ph.D. degree in the Department of Electrical and Computer Engineering at the University of Texas at Austin in 2006. From 1998 to 2003, he was a Senior Member of the Technical Staff of the R\&D Division of KT, Korea, where he researched 3G CDMA systems. 
    
    He is the recipient of IEEE Vehicular Technology Society Jack Neubauer Memorial Award (Best System Paper Award) in 2002. He also received the IEEE Vehicular Technology Society Dan Noble Fellowship Award, the IEEE Communication Society Asia Pacific Young Researcher Award, the NAAI Distinguished Artificial Intelligence Scholar Award, the Okawa Foundation Research Grant Award, the Haedong Research Award and the Haedong Young Scholar Award from KICS, and the Irwin-Jacobs Award from Qualcomm and KICS. While at the University of Texas at Austin, he was the recipient of William S. Livingston Graduate Fellowship and Information and Telecommunication Fellowship from Ministry of Information and Communication (MIC), Korea. He is an Area Editor for the IEEE Transactions on Wireless Communications from Aug. 2022 and an Editor for the IEEE Transactions on Vehicular Technology from Apr. 2011. He is the co-Editor-in-Chief of IEEE/KICS Journal of Communications and Networks from Dec. 2024. He served as the Executive Editor Chair for the IEEE Transactions on Wireless Communications (2019- 2021) and Executive Editor (2014-2019). He was also an Editor for the IEEE Transactions on Wireless Communications (2009-2014), for the IEEE Wireless Communications Letter (2012-2017), and as Guest Editor for the IEEE Journal on Selected Areas in Communications, IEEE Communications Magazine, and IEEE Open Journal of the Communications Society. He is a member of National Academy of Engineering of Korea (NAEK).
\end{IEEEbiography}

\end{document}